\definecolor{cvprblue}{rgb}{0.21,0.49,0.74}
\def\Eqref#1{Equation~\eqref{#1}}
\def\1{\bm{1}}
\def\vv{{\bm{v}}}
\DeclareMathAlphabet{\mathsfit}{\encodingdefault}{\sfdefault}{m}{sl}
\SetMathAlphabet{\mathsfit}{bold}{\encodingdefault}{\sfdefault}{bx}{n}
\newcommand{\R}{\mathbb{R}}
\renewcommand{\vec}[1]{{\boldsymbol{#1}}}
\newcommand{\X}{\vec{X}}
\newcommand{\Z}{\vec{Z}}
\newcommand{\mat}[1]{\mathbf{#1}}
\newtheorem{theorem}{Theorem}[section]
\newtheorem{lemma}[theorem]{Lemma}
\definecolor{myblue}{HTML}{0153d6} 
\definecolor{myred}{HTML}{ff409b}
\definecolor{commentcolor}{RGB}{110, 154, 155}
\newcommand{\oursfull}{\textnormal{Attention Modulated Overshooting sampler}}
\newcommand{\ours}{\textnormal{AMO}}
\newcommand{\oursbf}{\textbf{AMO}}
\newcommand{\ourswoatt}{\textnormal{Overshooting sampler}}
\newcommand{\xmark}{\ding{55}}%
\newcommand{\atphantom}{\vphantom{${}^2$}}
\newcommand{\AProcedure}[2]{\Procedure{\smash{#1}}{\smash{#2}}}
\newcommand{\AState}[1]{\State{\smash{#1}}}
\newcommand{\AFor}[1]{\For{\smash{#1}}}
\newcommand{\mm}{\boldsymbol{m}}
\newcommand{\xxi}{\boldsymbol{\xi}}
\newcommand\myapprox{\mathrel{\overset{\makebox[0pt]{\mbox{\normalfont\tiny\sffamily Law}}}{\approx}}}
\newcommand\myeq{\mathrel{\overset{\makebox[0pt]{\mbox{\normalfont\tiny\sffamily Law}}}{=}}}
\title{AMO Sampler: Enhancing Text Rendering with Overshooting}
\author{
Xixi Hu$^{1,2}$\thanks{Equal contribution}, Keyang Xu$^{1}$\footnotemark[1], Bo Liu$^{2}$, Qiang Liu$^{2}\thanks{Qiang Liu and Hongliang Fei jointly advised this work.}$ and Hongliang Fei$^{1}$\footnotemark[2]\\
$^1$Google, $^2$University of Texas at Austin \\
{\tt\small \{hxixi,bliu,lqiang\}@cs.utexas.edu; \{keyangxu,hongliangfei\}@google.com}
}
\begin{document}

\twocolumn[{
\maketitle
\vspace{-20pt}
\begin{center}
    \captionsetup{type=figure}
    \includegraphics[width=0.92\textwidth]{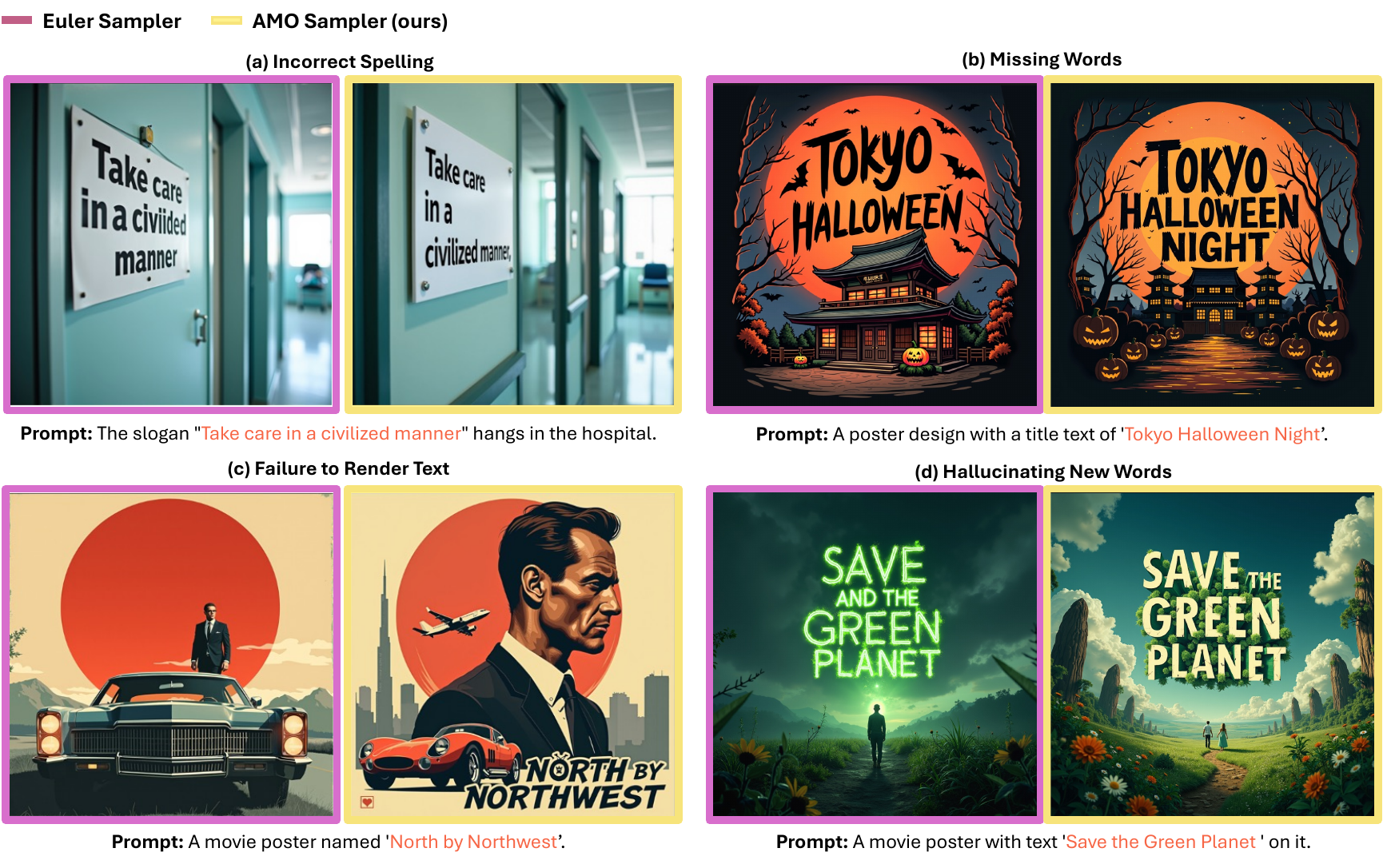}
    \vspace{-5pt}
    \caption{\textbf{Improved Text Rendering.} \textbf{(a)-(d)} illustrate four common text rendering mistakes in text-to-image generations. Compared to the standard Euler sampler \textbf{(purple)}, our \oursfull{}~(\ours) \textbf{(yellow)} produces accurate and complete text \emph{without} additional training, and remains as computationally efficient as the Euler sampler.
    }
    \label{fig:teasing}
\end{center}
}]

\maketitle
\renewcommand{\thefootnote}{} 
\footnotetext{$\ast$ Equal contribution. }
\footnotetext{$\dagger$ Qiang Liu and Hongliang Fei jointly advised this work.}
\begin{abstract}
Achieving precise alignment between textual instructions and generated images in text-to-image generation is a significant challenge, particularly in rendering written text within images. Sate-of-the-art models like Stable Diffusion 3 (SD3), Flux, and AuraFlow still struggle with accurate text depiction, resulting in misspelled or inconsistent text.
We introduce a training-free method with \emph{minimal} computational overhead that \emph{significantly} enhances text rendering quality.
Specifically, we introduce an overshooting sampler for pretrained rectified flow (RF) models, by alternating between over-simulating the learned ordinary differential equation (ODE) and reintroducing noise. Compared to the Euler sampler, the overshooting sampler effectively introduces an extra Langevin dynamics term that can help correct the compounding error from successive Euler steps and therefore improve the text rendering. However, when the overshooting strength is high, we observe over-smoothing artifacts on the generated images. 
To address this issue, we propose an \oursfull{} (\ours{}), which adaptively control the strength of overshooting for each image patch according to their attention score with the text content. 
\ours{} demonstrates a \textbf{32.3\%} and \textbf{35.9\%} improvement in text rendering accuracy on SD3 and Flux without compromising overall image quality or increasing inference cost. Code available at: \href{https://github.com/hxixixh/amo-release}{https://github.com/hxixixh/amo-release}. 
\end{abstract}

\section{Introduction}
\label{sec::intro}
Recent advances in diffusion models~\citep{song2020denoising,song2020score, song2019generative, liu2022rectified, lipman2022flow} have enabled high-quality image and video generation. Text-to-image generation, where neural networks create images from natural language prompts, has emerged as a transformative application of AI. Despite significant progress, a key challenge remains in precisely aligning generated images with given text instructions, especially in text rendering tasks, where models often struggle to display specific text accurately. This misalignment results in errors like misspellings or incorrect wording (see Figure~\ref{fig:teasing} for examples), limiting the models' utility in fields like graphic design, advertising, and assistive technologies~\cite{saharia2022photorealistic, esser2024scaling}.

Although fine-tuning with curated text data can improve text rendering~\citep{chen2023textdiffuser, chen2024textdiffuser}, it requires additional data collection and computationally expensive retraining, making it impractical for many applications. Furthermore, such fine-tuning may inadvertently compromise the model's overall image-generation capabilities. In this work, we investigate methods to enhance text rendering quality. We focus on rectified flow (RF)~\citep{liu2022rectified} models, which have emerged as a compelling alternative to conventional diffusion models due to their conceptual simplicity, ease of implementation, and improved generation quality~\citep{esser2024scaling}. Specifically, we introduce a lightweight, training-free sampling approach that significantly improves text rendering accuracy in generated images.

We introduce a novel and straightforward stochastic sampling approach on top of RF models, named the \ourswoatt{}, that iteratively adds noise to the Euler sampler while preserving the marginal distribution. In particular, the \ourswoatt{} alternates between over-simulating the learned ordinary differential equation (ODE) and re-introducing the noise (See Section~\ref{sec::method}). As we will show in Section~\ref{sec::method-overshoot}, \ourswoatt{} effectively introduces an extra Langevin dynamics term that can help correct the compounding error from the successive applications of the Euler sampler, therefore enhancing the text generation quality. The overshooting strength is controlled by a hyperparameter $c > 0$, corresponding to the magnitude of the Langevin step. When $c$ is large, the Langevin term becomes inaccurate and can introduce error itself. To mitigate this problem, we propose a targeted use of the \ourswoatt{} for text rendering, by adaptively controlling its strength on different patches of the image according to their attention scores with the text content in the prompt. We name the combined approach as \oursfull~(\ours{}).

We validate the \ours{} sampler on state-of-the-art RF-based text-to-image models, including SD3, Flux, and AuraFlow. Our experiments demonstrate a \emph{significant} improvement in text rendering accuracy, with correct text generation rates increasing by \textbf{32.3\%} on SD3 and \textbf{35.9\%} on Flux, \emph{without} compromising overall image quality.

\section{Background on Rectified Flow}
\label{sec::background}
This section provides a brief introduction to Rectified Flow (RF)~\citep{liu2022rectified}. RF seeks to learn a mapping from an easy-to-sample initial distribution \( \X_0 \sim \pi_0 \), 
which we assume to be the standard Gaussian $\mathcal{N}(\vec 0, \mat I)$,  to a target data distribution \( \X_1 \sim \pi_1 \). This is achieved by learning a velocity field \( \vec{v} \) that minimizes the following objective:
\[
\min_{\vec v} \int_0^1 \mathbb{E}_{(\X_0, \X_1) \sim \pi_0, \times \pi_1} \left[ \left\| v(\X_t, t) - 
\dot{\X}_t  \right\|^2 \right] dt.
\]
In RF, \( \X_t \) is defined as a time-differentiable interpolation between \( \X_0 \) and \( \X_1 \), i.e., \( \X_t = t \X_1 + (1 - t) \X_0 \) and \( \dot{\X}_t = \X_1 - \X_0 \). Once \( \vec{v} \) is learned, it induces an ordinary differential equation (ODE):
\[
\frac{d}{d t} \Z_t = \vec{v}(\Z_t, t), ~~~\forall t \in[0,1], \quad \Z_0 = \X_0.
\]
It can be shown that $\Z_t$ and $\X_t$ share the same marginal law~\citep{liu2022rectified} if $\vec{v}$ is learned well, and therefore simulating this ODE with \( \Z_0 = \X_0 \) results in \( \Z_1 \) being samples from the target distribution \( \pi_1 \). In practice, this ODE can be discretized using the Euler method by selecting \( N \) time steps \( t_0 = 0 < t_1 < \dots < t_N = 1 \), and iteratively updating:
\[
\tilde{\Z}_{t_{k+1}} = \tilde{\Z}_{t_k} + (t_{k+1} - t_k) \vec{v}(\tilde{\Z}_{t_k}, t_k) , \quad \tilde{\Z}_{0} \sim \pi_0.
\]
We use $\tilde{\Z}$ to differentiate the discretized ODE trajectory from its ideal continuous time limit $\Z$. Note the entire process is deterministic once \( \tilde{\Z}_{0} \) is chosen.

\begin{algorithm*}[t]
\captionof{algorithm}[srf]{\atphantom\ \ Attention-Modulated Overshoot Sampling for Rectified Flow.}
\begin{spacing}{1.1}
\begin{algorithmic}[1] 

 \AProcedure{OvershootingSampler}{$v, ~\{t_i\}_{i=0}^N, ~ c \in \R^+$}
    
    \AState{{\bf Initialize} $\tilde{\Z}_0 \sim \mathcal{N}\big(\vec{0}, ~\vec{I}\big)$}         
    \AFor{$i \in \{0, \dots, N-1\}$}         
        \AState{Calculate velocity $\vv_i = v(\tilde{\Z}_{t_i}; t_i)$; get the cross attention mask $\mm_i$}          
        \AState{Overshooted ODE update:}  \\   
        \AState{~~~~ ~ $\displaystyle 
        \hat {\Z}_{\vec{o}} = \tilde{\Z}_{t_i} + (\vec{o} - t_i) \circ \vv_i, 
        $~~~with~~~ $\displaystyle 
         \hspace{0pt} \vec{o}= \text{min}\big( t_{i+1} + c (t_{i+1} - t_i) \mm_i, ~1\big). 
         $  
        }\\ 
        \AState{Backward update by adding noise:} \\
    \AState{ $
    \displaystyle \tilde \Z_{t_{i+1}} \gets \vec{a} \hat {\Z}_{\vec{o}} + \vec{b} \xxi_i,$ 
    ~~~~\text{where}~
    $\displaystyle\xxi_i \sim \mathcal{N} \big(\vec{0}, \vec{I}\big),$ and 
        $\displaystyle
        \vec{a} = \frac{s}{\vec{o}}~~~\text{and}~~~\vec{b} = \sqrt{(1 - s)^2 - (\vec{a} (1 - \vec{o}))^2}. 
        $} \\ 
    \EndFor
    \AState{\textbf{return} $\tilde{\Z}_{t_N}$}
  \EndProcedure
\end{algorithmic}
\end{spacing}
\label{alg:srf}
\end{algorithm*}

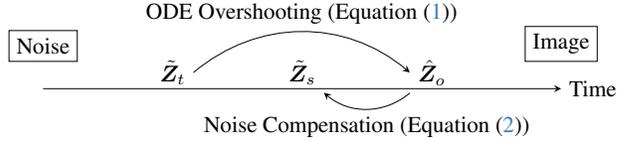
\begin{figure}[t!]
\resizebox{\columnwidth}{!}{ 

\begin{tikzpicture}[>=stealth]

\draw[->] (1,-0.25) -- (9,-0.25) node[right] {Time};

\node[draw, rectangle, align=center, above=-1pt] at (1, 0.2) {Noise};
\node[draw, rectangle, align=center, above=-1pt] at (9, 0.2) {Image};

\node at (3, 0) (Zt) {$\tilde{\Z}_t$};
\node at (5, 0) (Zs) {$\tilde{\Z}_s$};
\node at (7, 0) (Zo) {$\hat{\Z}_o$};

\draw [->, bend left=40] (Zt.east) to node[above] {ODE Overshooting (\Eqref{eq:ode-overshoot})} (Zo.west);

\draw [->, bend left=40] (Zo.south west) to node[below] {Noise Compensation (\Eqref{eq:noise-compensate})} (Zs.south east);

\end{tikzpicture}
}
\vspace{-5pt}
\caption{Visualization of the Overshooting Sampler. 
Given $\tilde{\Z}_t$ at time $t$, we first over-simulate the learned ODE to $\hat{\Z}_o$, and then add noise and return to $\tilde{\Z}_s$. The noise is carefully selected such that $\tilde{\Z}_s$ matches $\X_s$'s marginal distribution. }
\label{fig:overshoot}
\vspace{-5pt}
\end{figure}

\section{Attention Modulated Overshooting Sampler}\label{sec::method}
In this section, we derive the \ourswoatt{} that adds stochastic noise to the Euler sampler while preserving the marginal distribution (Section~\ref{sec::method-overshoot}). Then we illustrate this is equivalent to adding a Langevin dynamics term at each Euler step (Section~\ref{sec::method-langevin}). Importantly, the extra Langevin term can help correct the compounding error from successive Euler steps. When the overshooting strength is high, the stochastic sampler can introduce artifacts. To mitigate this problem, we propose a straightforward attention modulation method that adaptively controls the strength of the overshooting for each image patch based on its attention score with the text content in the prompt (Section~\ref{sec::method-attention-modulation}).

\subsection{Stochastic Sampling via Overshooting}
\label{sec::method-overshoot}
This section provides a derivation of a stochastic sampling method, \ourswoatt{}, for RF-trained models. The main idea is to overshoot the forward Euler step and subsequently compensate with backward noise injection. In the limit of small step sizes, this process converges to a stochastic differential equation (SDE), and we will show rigorously in the subsequent section that the resulting SDE ensures the marginal preserving property according to the Fokker-Planck equation.

Following our notation in the previous section, let $\tilde{\Z}_0 = \X_0 \sim \pi_0$ be a sample from the initial noise distribution, and assume we have obtained $\tilde{\Z}_t$ at time $t$, and want to get a $\tilde{\Z}_{s}$ for the next time point $s = t + \epsilon$, where $\epsilon > 0$ is the step size when denoising. Note that for standard Euler sampler, $\tilde{\Z}_s$ is obtained from $\tilde{\Z}_t + \epsilon \vec{v}(\tilde{\Z}_t, t)$. In comparison, to introduce stochastic noise, we propose the overshooting sampler which consists of the following two steps (See Figure~\ref{fig:overshoot}):

\paragraph{1. ODE Overshooting} First, we temporarily advance $\tilde{\Z}_t$ to $\hat{\Z}_o$ where $o = s + c \epsilon$ (with $c > 0$) denotes an overshooting time point that is larger than $s$. Specifically, we conduct the following the forward Euler step:  
\begin{equation}
\hat{\Z}_o = \tilde{\Z}_t + \vec{v}(\tilde{\Z}_t, t) (o - t) = \tilde{\Z}_t + (1 + c) \epsilon \vec{v}(\tilde{\Z}_t, t). 
\label{eq:ode-overshoot}
\end{equation}
We use $\hat{\Z}_o$ to emphasize that it is reached via overshooting.

\paragraph{2. Noise Compensation} Next, we want to revert from time $o$ to time $s$ by \emph{noising} $\hat{\Z}_o$. Assume we achieve this by computing
\begin{equation}
    \tilde{\Z}_s = a \hat{\Z}_o + b \vec \xi, \quad \vec \xi \sim \mathcal{N}(\vec 0, \mat I).
    \label{eq:noise-compensate}
\end{equation}
Then, the goal is to determine the coefficients $a$ and $b$ here. If we assume the overshooting step is accurate, then $\hat{\Z}_o \myapprox \Z_o \myeq o \X_1 + (1 - o) \X_0 $, where $\myeq$ denotes equality in distribution. Therefore,
\begin{equation}
\begin{split}
\tilde{\Z}_s &\myeq a (o \X_1 + (1 - o) \X_0) + b \xi \\
&= a o \X_1 + \sqrt{a^2 (1-o)^2 + b^2 } \xi',
\end{split}
\end{equation}
where $\xi' \sim \mathcal{N}(\vec 0, \mat I)$ and the last line is derived as both $\xi$ and $\X_0$ are i.i.d Gaussian noises. On the other hand, we know that $\Z_s \myeq s \X_1 + (1 - s) \X_s$. Hence, by matching the coefficients, we get
\begin{equation}
a = \frac{s}{o}, \quad b = \sqrt{(1 - s)^2 - s^2\frac{(1-o)^2}{o^2} }.
\label{eq:overshoot-aandb}
\end{equation}
Recall that $s = t + \epsilon$ and $o = t + (1 + c) \epsilon$, as $\epsilon \rightarrow 0$, the above process (steps 1 and 2) will approach the following limiting stochastic different equation (SDE) (See Appendix~\ref{sec::overshooting_sde_limit} for the derivation):
\begin{equation}
\small
d \Z_t = \bigg( (1 + c) \vec{v}(\Z_t, t) - \frac{c}{t}\Z_t \bigg) dt + \sqrt{\frac{2(1-t)}{t} c}\ d\vec{W}_t, 
\label{eq:overshoot-sde}
\end{equation}
where $\vec{W}_t$ denotes the Brownian motion.

\subsection{Overshooting $\approx$ Euler + Langevin Dynamics} 
\label{sec::method-langevin}
\Eqref{eq:overshoot-sde} can also be derived directly from the Fokker-Planck equation following similar ideas from \cite{song2020score}.  Let $d\Z_t = \vec f_t(\Z_t) dt + \sigma_t d\vec{W}_t$ be a SDE where $\sigma_t\geq 0$ 
is a diffusion coefficient independent of $\X_t$.
(This is true in almost all contemporary diffusion/flow models.)
Denote by $\rho_t$ the density of $\Z_t$. According to the Fokker-Planck equation:
\begin{equation}
\begin{split}
\dot{\rho}_t 
&= - \nabla \cdot \big( \vec f_t \rho_t \big) + \frac{1}{2} \nabla^2 \big( \sigma_t^2 \rho_t \big), \\
&= - \nabla \cdot \big( ( \vec f_t - \frac{\sigma_t^2}{2} \nabla \log \rho_t) \rho_t \big).
\end{split}
\label{eq:fp}
\end{equation}
The last line follows $\nabla {\rho_t}/\rho_t = \nabla \log \rho_t$. Now, let $\vec f_t(\Z_t) = \vec{v}(\Z_t, t) + \frac{\sigma_t^2}{2} \nabla \log \rho_t(\Z_t)$, one can check that for any function $\sigma_t$ (according to \Eqref{eq:fp}), the SDE
\begin{equation}
d \Z_t = \bigg( \vec{v}(\Z_t, t) + \frac{\sigma_t^2}{2} \nabla \log \rho_t(\Z_t) \bigg) dt + \sigma_t d\vec{W}_t,
\label{eq:langevin-sde}
\end{equation}
shares the \emph{same} marginal law at any time $t$ as the ODE
\[
d\Z_t = \vec{v}(\Z_t,t) dt.
\]
This is because the $ \frac {\sigma_t^2}{2} \nabla \log \rho_t$ cancels out in \Eqref{eq:fp} and it reduces to the continuity equation $\dot \rho_t = - \nabla \cdot (\vec v_t\rho_t)$ for the original ODE. Note that
for RF models, $\nabla \log \rho_t(x) = (t \vec{v}(x, t) - x)/(1-t)$. Hence, by choosing $\sigma_t^2 = 2(1-t)c/t$, one exactly recovers \Eqref{eq:overshoot-sde}. The full derivation is provided in Appendix~\ref{sec::langevin_dynamics}.

\paragraph{Langevin Dynamics Corrects Marginals} Note that \Eqref{eq:langevin-sde} can be equivalently viewed 
as the ODE combined with one step of Langevin dynamics: 
\begin{equation}
\small
d \Z_t = \vec{v}(\Z_t, t)dt + \underbrace{\bigg( \alpha_t \nabla \log \rho_t(\Z_t) dt + \sqrt{2 \alpha_t} d\vec{W}_t \bigg)}_{\text{Langevin Dynamics}},
\label{eq:langevin-sde-2}
\end{equation}
where $\alpha_t = \sigma_t^2/2$ is the step size for the Langevin dynamics. As $\tilde \Z_t$ comes from successive Euler steps, it does not necessarily follow $\rho_t$ of $\Z_t$.
In this case, the Langevin term in \Eqref{eq:langevin-sde-2} helps move $\tilde{\Z}_t$ towards the desired marginal $\rho_t$~(see e.g., \cite{karras2022elucidating}). 
In Figure~\ref{fig:toy}, we visualize this correction effect on a 2D toy problem.

\begin{figure}[h!]
    \centering
    \vspace{-5pt}
    \includegraphics[width=1.0\columnwidth]{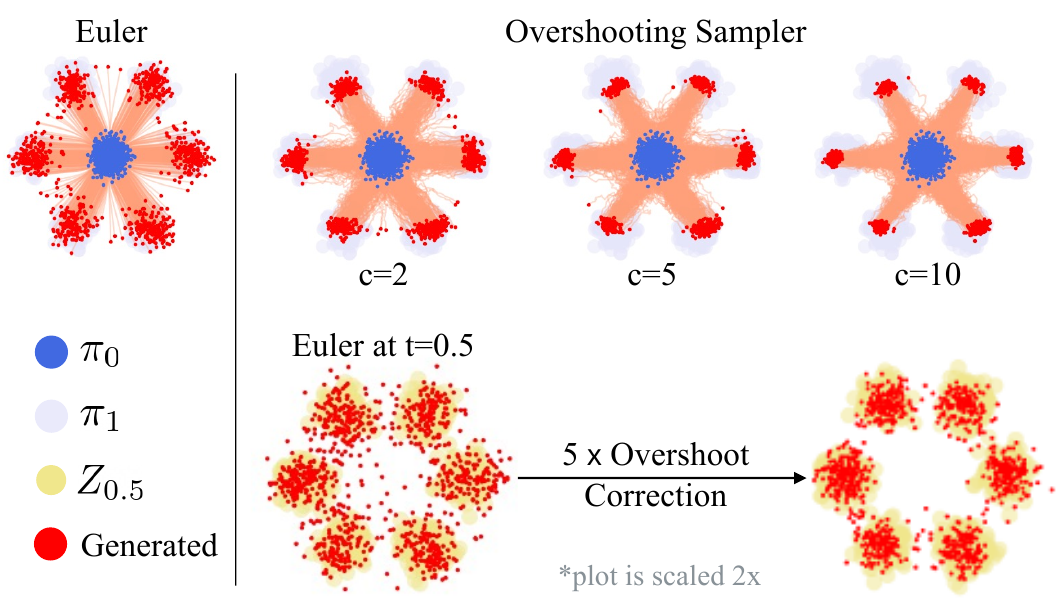}
    \caption{\textbf{Euler versus Overshooting on a toy dataset.}
     The noise ($\pi_0$) and data ($\pi_1$) distributions are shown as blue and light-purple dots. \textbf{Top:} The samples from Euler deviate from $\pi_1$. Overshooting sampler helps correct the marginal. As $c$ increases, the correction effect is stronger, but it also introduces smoothing artifacts. \textbf{Bottom:} Starting with $\tilde \Z_t$ ($t=0.5$) from the Euler sampler, if we apply 5 times of (Overshooting - Euler), i.e., the Langevin dynamics part in \Eqref{eq:langevin-sde-2}), the samples align better with $\Z_{0.5}$.}
    \label{fig:toy}
    \vspace{-5pt}
\end{figure}

\paragraph{Remark} It is worth noting that the above equivalence (between \Eqref{eq:overshoot-sde} and Overshooting) only holds in the limit of infinitely small step sizes. Compared to applying the Euler discretization of the SDE in \Eqref{eq:overshoot-sde}, we empirically found that the Overshooting sampler tends to be more stable and yields better text rendering on real images (See Table~\ref{tab:ablation_sde}).

\begin{table}[h!]
\centering
\resizebox{\columnwidth}{!}{
\begin{tabular}{l ccc}
\toprule
& \textbf{20 Steps} & \textbf{50 Steps} & \textbf{100 Steps} \\
\midrule
Discretize SDE in \Eqref{eq:overshoot-sde} & 51.5~\% & 71.0~\% & 76.0~\%\\  

Overshooting & \textbf{68.5~\%} & \textbf{81.5~\%} & \textbf{81.5~\%} \\  

\bottomrule
\end{tabular}
}
\vspace{-5pt}
\caption{\textbf{Text Rendering Accuracy between Discretize SDE in \Eqref{eq:overshoot-sde} and Overshooting}. We conduct a human study on the text rendering accuracy and compare discretizing SDE in \Eqref{eq:overshoot-sde} and Overshooting. We compare the two samplers for 20, 50, and 100 inference steps, and found that with fewer steps, the Overshooting sampler demonstrates a more significant improvement over applying the discretized SDE. This is because when $\epsilon$ (the step size) is large, the Euler discretization becomes inaccurate and unstable. }
\label{tab:ablation_sde}
\vspace{-5pt}
\end{table}

\subsection{Attention Modulation}
\label{sec::method-attention-modulation}
In practice, while increasing $c$ can enhance text rendering quality, it may also introduce artifacts (see Figure~\ref{fig:details} for examples). This is because, with larger values of $c$, the single-step Langevin correction in \Eqref{eq:langevin-sde-2} becomes less accurate. To address this issue, we propose dynamically adjusting the overshooting strength for different image patches based on their attention scores to the text content in the prompt. Simply put, this approach increases overshooting for areas related to the text while applying less to the rest of the image.

More concretely, assume the image consists of $h \times w$ patches, where $h$ and $w$ denote the height and width dimensions of the 2D image tokens (e.g., 64 $\times$ 64 in our experiment). Let $\vec x^{\texttt{image}}_{h,w}$ be the $(h, w)$-th image patch token. Let $\{\vec x^{\texttt{text}}_i\}_{i=1}^n$ denote the set of tokens within the language instruction prompt for generating text and $n$ is the total number of text-related tokens (e.g., so $\{\vec x^{\texttt{text}}\}$ can be the `Tokyo Halloween Night' in the prompt ``A poster design with a title text of `Tokyo Halloween Night'). Then, we construct a mask $\vec{m} \in \mathbb{R}^{h \times w}$ in the following way:
\begin{equation} 
\vec{m}_{h, w} = \sum_{i=1}^n \frac{\exp(Q(\vec  x^{\texttt{text}}_i)^\top K(\vec x^{\texttt{image}}_{h,w}))}{\sum_{h', w'}  \exp(Q(\vec  x^{\texttt{text}}_i)^\top K(\vec x^\texttt{image}_{h',w'})}, 
\end{equation}
where $Q$ and $K$ denote the query and key vectors in attention. We then average the attention map over different layers and heads and rescale its values between 0 and 1. After that, we apply the resulting attention map, $\vec{m}$, to give different image patches different amounts of overshooting. Specifically, assume $\vec{o} \in [0, 1]^{h \times w}$, where $\vec{o}_{h, w} = s +  c \epsilon \vec{m}_{h, w}$, we have
\begin{equation}
\small
\hat{\Z}_o = \tilde{\Z}_t + \vec{v}(\tilde{\Z}_t, t) \circ (\vec{o} - t) = \tilde{\Z}_t + \epsilon (1 + c \vec{m}) \circ \vec{v}(\tilde{\Z}_t, t),
\end{equation}
where $\circ$ denotes element-wise product. 
The noise compensation step in \Eqref{eq:overshoot-aandb} is similarly adapted, where $a, b, o$ are replaced by other vector counterparts:
\begin{equation} 
\vec{a} = \frac{s}{\vec{o}}, \quad \vec{b} = \sqrt{(1 - s)^2 - s^2 \frac{(1 - \vec{o})^2}{\vec{o}^2}}.
\end{equation}
In the above, all operations are elementwise. Both $\vec{a}$ and $\vec{b}$ have dimensions $\mathbb{R}^{h \times w}$. The entire \ours{} sampling process is provided in  Algorithm~\ref{alg:srf}.

\section{Related Work} This section gives an overview of diffusion and flow-based generative models, followed by a discussion on deterministic and stochastic sampling techniques, and recent advances in enhancing text rendering in text-to-image generation.

\paragraph{Diffusion and Flow Models.} Diffusion models~\cite{song2019generative, song2020score, ho2020denoising} have emerged as powerful generative frameworks capable of producing high-fidelity data, including images, videos, audio, and point clouds~\cite{dhariwal2021diffusion, saharia2022photorealistic, ho2022video, podell2023sdxl, betker2023improving, baldridge2024imagen}. These models add noise to data in a forward process, then learn to reverse this noise to generate new samples, thereby modeling the data distribution through a progressive denoising process. Recently, Rectified Flow (RF)~\cite{liu2022flow, lipman2022flow, albergo2022building, heitz2023iterative}—also known as Flow Matching, InterFlow, and IADB—has been proposed as a novel approach that leverages an ordinary differential equation (ODE) with deterministic sampling during inference. RF simplifies the diffusion and denoising process, offering computational efficiency while maintaining high-quality generation, and positioning itself as a compelling alternative to traditional diffusion models. The rectified flow model has proven successful in various applications, including image generation\cite{esser2024scaling}, sound generation~\cite{fei2024flux, le2024voicebox} and video generation~\cite{polyak2024movie}.

\paragraph{Deterministic and Stochastic Sampling Methods.} Sampling strategies in generative modeling are crucial as they influence the quality, diversity, and efficiency of generated samples. Deterministic sampling methods, such as those based on ODE solvers~\cite{song2020score, song2020denoising, lu2022dpm, karras2022elucidating}, provide computational efficiency and stability but may lead to poorer output quality~\cite{karras2022elucidating}. On the other hand, stochastic sampling methods introduce randomness into the sampling process, offering an alternative approach with added variability in intermediate steps. \citet{meng2023distillation} introduced an N-step stochastic sampling method for distilled diffusion models, where noise is added at intermediate steps to achieve efficient sampling with as few as 2-4 steps. This approach provides an alternative to deterministic sampling, allowing the model to produce high-quality samples. \citet{karras2022elucidating} proposed a hybrid stochastic sampling technique that combines deterministic ODE steps with noise injection. In their method, noise is temporarily added at each step to improve sampling quality, followed by an ODE backward step to maintain the correct distribution. This hybrid approach results in better output quality compared to purely deterministic sampling methods. However, these existing methods are specifically designed for diffusion models. In contrast, we propose a stochastic sampler for rectified flow, providing an alternative solution to the traditional Euler method.

\paragraph{Enhancing Text Rendering in T2I Generation.} Accurate text rendering in text-to-image (T2I) generation models remains a significant challenge, as models often struggle to produce text within images that precisely matches the prompts, leading to incoherent or incorrect textual content. Classifier-Free Guidance (CFG)\cite{ho2022classifier} can alleviate this issue by adjusting the influence of the text prompt during sampling, effectively balancing prompt guidance and the diversity of the generated content. Scaling or enlarging the text encoder has been shown to benefit text rendering\cite{podell2023sdxl}. Additionally, using a T5 text encoder significantly improves text rendering performance~\cite{saharia2022photorealistic, esser2024scaling}. Specialized fine-tuning approaches have also been explored, where pretrained text-to-image models are adapted with architectures designed specifically for text rendering tasks~\cite{ramesh2022hierarchical, liu2022character, ma2023glyphdraw, chen2024textdiffuser, tuo2023anytext, chen2023textdiffuser, zhao2023udifftext, liu2024glyph}. These methods enhance the model's ability to generate accurate textual content but typically require extensive retraining or fine-tuning, which can be computationally intensive. In contrast, our work introduces a training-free approach that enhances text rendering during inference. By incorporating stochastic sampling and an attention mechanism into the Rectified Flow framework, we improve text rendering quality without modifying the underlying model or incurring additional training costs.

\section{Experiment}

We conduct experiments with several open-source text-to-image models based on Rectified Flow, including Stable Diffusion 3 (medium)~\cite{esser2024scaling}, Flux (dev)~\cite{blackforestlabs_flux_2023}, and AuraFlow~\cite{huggingface_auraflow_2023}. Image generation was performed using the NVIDIA A40 GPU during inference. To ensure high-quality visual assessment, all output images were generated at a resolution of 1024x1024 pixels. Detailed model configurations and hyperparameter settings can be found in the Appendix~\ref{sec::experiment_setting}.

\begin{figure*}[ht!]
    \centering
    \includegraphics[width=1.0\textwidth]{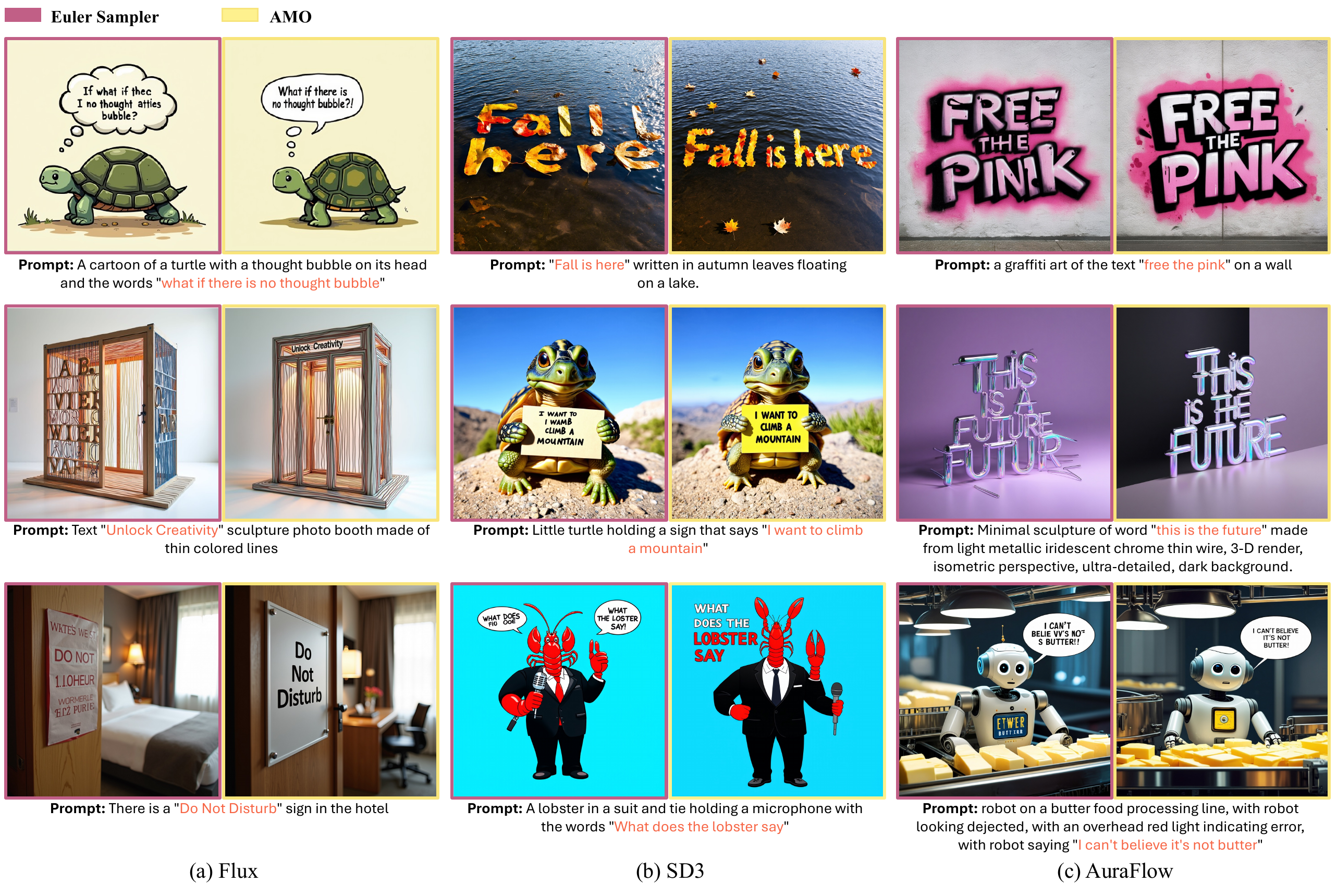}
    \vspace{-2\baselineskip}
    \caption{\textbf{Comparison of text rendering quality between Euler and our stochastic sampling method} across three different text-to-image models: (a) Flux, (b) Stable Diffusion 3 (SD3), and (c) AuraFlow. All results are generated using the same random seed for consistent comparison. Within each pair of images, the left column corresponds to the Euler sampler, while the right column displays the results from our method.  Our approach consistently generates clearer and more legible text that closely matches the provided prompts. Additional examples are provided in the Appendix.}
    \label{fig:euler_ours}
    \vspace{-5pt}
\end{figure*}

\begin{table}[ht!]
\centering
\resizebox{\columnwidth}{!}{
    \setlength{\tabcolsep}{0.3cm}
    \begin{tabular}{lcccccccc}
    \toprule
     & \textbf{FID~$(\downarrow)$} & \textbf{CLIP~$(\uparrow)$} & \textbf{OCR-A~$(\uparrow)$} & \textbf{OCR-F~$(\uparrow)$} & \textbf{CR~$(\uparrow)$} \\
    \midrule
    SD3 (Euler) & 81.1 & \textbf{0.319} & 0.256 & 0.473 & 32.5 \% \\
    SD3 (AMO) & \textbf{80.9} & 0.317 & \textbf{0.279} & \textbf{0.482}  & \textbf{43.0 \%} \\
    \midrule
    Flux (Euler) & 111.8 & 0.299 & 0.313 & 0.458 & 74.0 \%\\ 
    Flux (AMO) & \textbf{108.9} & \textbf{0.303} & \textbf{0.381} & \textbf{0.494} & \textbf{82.5 \%}\\
    \midrule
    AuraFlow (Euler) & 128.4 & \textbf{0.299} & 0.075 & 0.238  & 1.0 \%\\
    AuraFlow (AMO) & \textbf{117.5} & 0.298 & \textbf{0.082} & \textbf{0.258}  & \textbf{3.0 \%}\\
    \bottomrule
    \end{tabular}
    }
    \caption{Quantitative evaluation of AMO against the Euler sampler. \textbf{OCR-A} and \textbf{OCR-F} are short for \textbf{OCR (Accuracy)} and \textbf{OCR (F-Measure)}. \textbf{CR} is the correction rate from human evaluation.}
    \label{tab:main-results}
    \vspace{-5pt}
\end{table}

\paragraph{Evaluation Metrics.}
We use a combination of automated and human evaluations to assess the performance of our models. For automated evaluation, we adopt benchmarks including DrawTextCreative~\cite{liu2022character}, ChineseDrawText~\cite{ma2023glyphdraw} and TMDBEval500~\cite{chen2024textdiffuser}, which comprises a total of 893 prompts drawn from various data sources. To assess the correctness of rendered text, we compute \textbf{OCR Accuracy} and \textbf{OCR F-measure} using a pre-trained Mask TextSpotter v3 model \cite{liao2020mask}. We evaluate the samples' visual-textual alignment using \textbf{CLIP Score}, specifically CLIP L/14~\cite{radford2021learning}, and also compute \textbf{FID} for overall visual quality between the CLIP image features and validation set images. 

However, automated OCR tools, such as those in MARIO-Eval~\cite{chen2023textdiffuser}, showed limitations in accurately detecting text from the generated images. This is partly because general-purpose text-to-image models can produce diverse and artistic fonts that humans can readily understand, but OCR models cannot recognize accurately. It is also worth noting that OCR accuracy can be negatively impacted by extraneous content in images, such as posters, hurting recalls but not affecting the overall human perception. This discrepancy is especially pronounced for general-purpose text-to-image models (see Appendix~\ref{sec:evaluation_issue}). To address these limitations, we conduct \textbf{human evaluation} by manually assessing the correctness of text rendering on samples covering 100 prompts. Further details are provided in the Appendix~\ref{sec::experiment_setting}.

\subsection{Comparison with Euler Sampler}

In this section, we compare \ours{} against the Euler sampler both quantitatively and qualitatively. We found that \ours{} significantly outperforms Euler on text rendering without sacrificing overall visual quality.

\paragraph{Quantitative Results}
As shown in Table~\ref{tab:main-results}, \ours{} achieves 82.5\% accuracy on Flux model in human evaluation, notably surpassing the 74\% accuracy of the standard Euler sampler. In addition, \ours{} improves the OCR metrics across all three text-to-image models, demonstrating a substantial enhancement in the model's ability to render text accurately. Furthermore, compared to the standard Euler method, \ours{} yields better FID scores, indicating superior overall image quality. 
As shown in Figure~\ref{fig:euler-ours-number}, we evaluated the performance of \ours{} using 20, 50, and 100 steps. Our results demonstrate that \ours{} consistently outperforms the deterministic sampler across all step counts, with a better performance improvement in the low-step scenarios. 

\begin{figure}[ht!]
    \centering
    \vspace{-5pt}
    \includegraphics[width=\columnwidth]{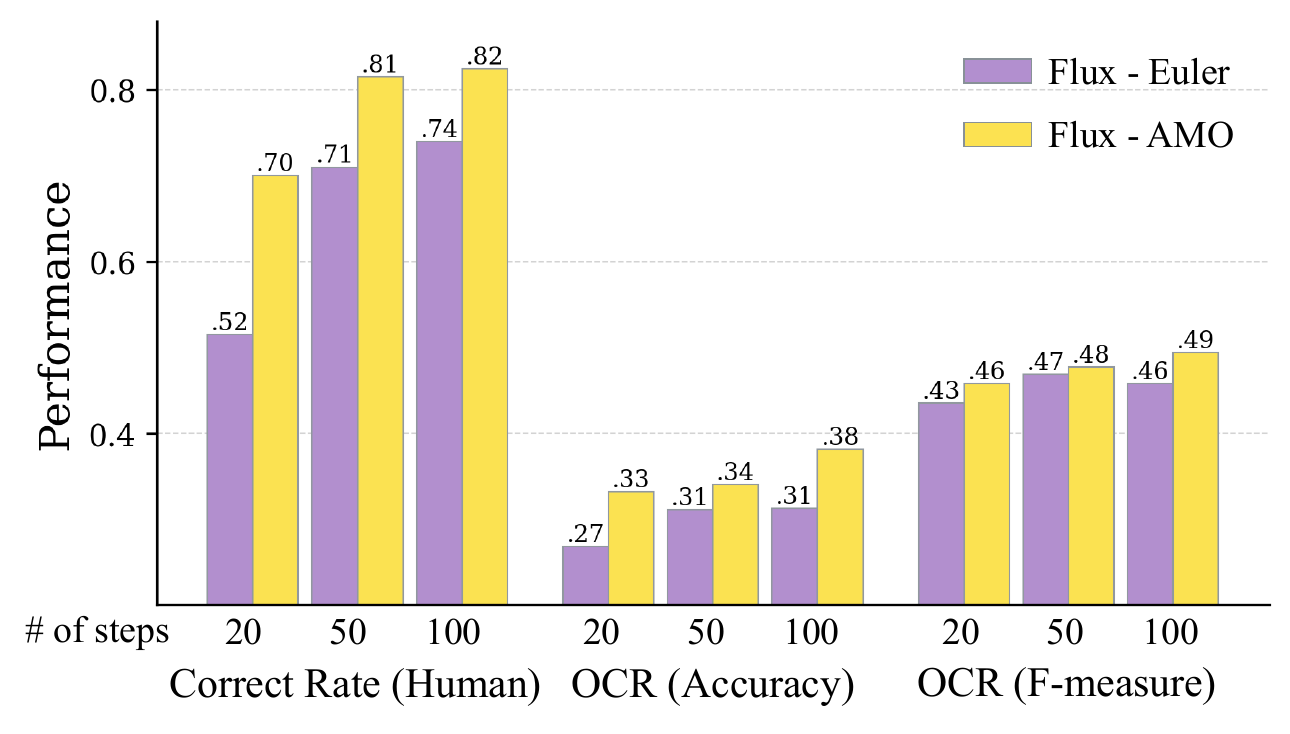}
    \vspace{-5pt}
    \caption{The comparison of Euler sampler and \ours{} across different sampling steps (20, 50, and 100 steps). \ours{} consistently outperforms the deterministic sampler on text rendering performance across all step sizes.}
    \label{fig:euler-ours-number}
    \vspace{-5pt}
\end{figure}

\paragraph{Qualitative Results.} Figure~\ref{fig:euler_ours} presents a visual comparison between \ours{} and the standard Euler sampling applied to Flux, Stable Diffusion 3, and AuraFlow. Images generated by \ours{} exhibit clear and legible text that closely aligns with the given prompts. In contrast, the Euler method frequently produces misaligned and misspelled text.

\subsection{Ablation Studies}
In this section, we conduct a detailed ablation study to analyze the impact of different components in \ours{} on text rendering accuracy and image quality using the Flux model.

\paragraph{Impact of Components in \ours{}.} 
The \ours{} sampler essentially consists of three parts: ODE overshooting (\textbf{O}), noise compensation (\textbf{N}), and attention modulation (\textbf{A}). We ablate their individual contribution, by comparing \ours{} without all of them (\xmark, ~\xmark, ~\xmark) (i.e., the Euler sampler), with only overshooting (\checkmark, ~\xmark, ~\xmark), without attention modulation (\checkmark, \checkmark, ~\xmark) and the full \ours{}. Results are summarized in Table~\ref{tab:ablation}. It is observed that \emph{both} overshooting and noise compensation are crucial for achieving accurate text rendering and high image quality. Notably, introducing overshooting alone results in a 0\% correct rate, as the marginal law is not preserved. The full \ours{} results in a similar performance to the Overshooting sampler (\ours{} without attention modulation), we think this is because metrics like OCR-F do not capture the image generation quality well. Therefore, we provide a visualization of samples from the Overshooting sampler against those from \ours{} in Figure~\ref{fig:details}. We observe that Overshooting without attention modulation can result in an over-smoothing effect, making the generated samples lose high-frequency details (See the parrot feather and smoke). This confirms the necessity of attention modulation.

\begin{table}[t!]
\centering
\resizebox{\columnwidth}{!}{
\begin{tabular}{l ccccc}
\toprule
\textbf{(O, N, A)} & \textbf{FID~$(\downarrow)$} & \textbf{CLIP~$(\uparrow)$} & \textbf{OCR-A~$(\uparrow)$} & \textbf{OCR-F~$(\uparrow)$} & \textbf{CR~$(\uparrow)$} \\
\midrule
(\xmark, ~\xmark, ~\xmark) & 111.8 & 0.299 & 0.313 & 0.458 & 74.0~\% \\  

(\checkmark, ~\xmark, ~\xmark) & 367.8 & 0.126 & 0.030 & 0.000 & 0.0~\%\\  


(\checkmark, \checkmark, ~\xmark) & 109.5 & \textbf{0.304} & 0.368 & \textbf{0.503} & 81.5~\%\\

(\checkmark, \checkmark, \checkmark)  & \textbf{109.0} & 0.301 & \textbf{0.381} & 0.494 & \textbf{82.5}~\% \\

\bottomrule
\end{tabular}
}
\caption{{Ablation of each component in AMO on Flux.}}
\label{tab:ablation}
\vspace{-5pt}
\end{table}

\begin{figure}[ht!]
    \centering
    \vspace{-5pt}
    \includegraphics[width=\columnwidth]{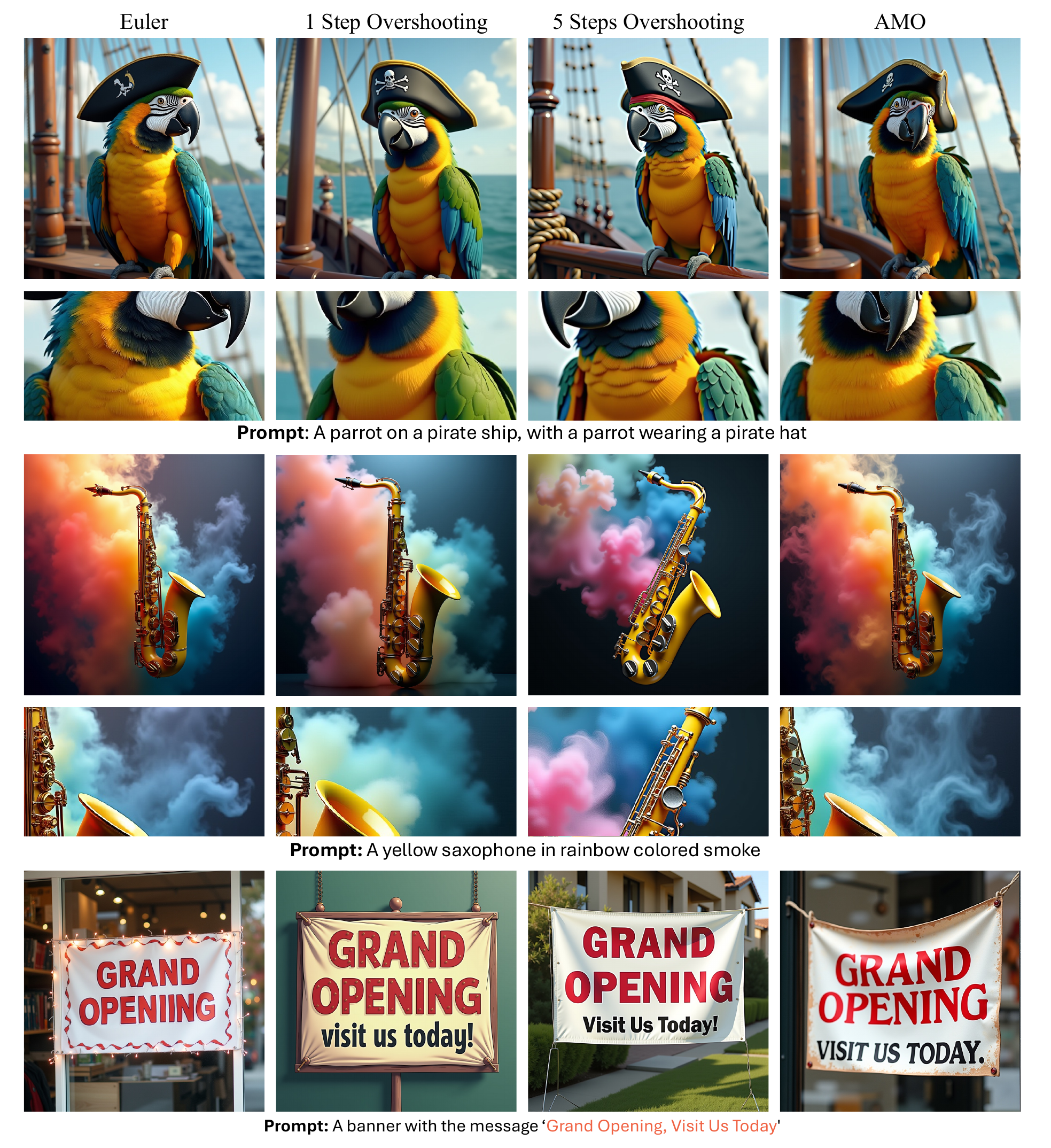}
    \caption{\textbf{Image Quality for Euler, Overshooting, and \oursbf{}.} Please zoom in for details. \textbf{Bottom:} both Overshooting (\ours{} without attention modulation) and \ours{} render the correct texts, while Euler renders misspelled texts. \textbf{Top:} Looking at the parrot's feather or the smoke behind the saxophone, Euler generates high-fidelity high-frequency details while the Overshooting sampler over-smooths the image (fewer details). \ours{} preserves the details from the Euler, with attention modulation. In addition, we conduct 5 Steps Overshooting, meaning that we use $c' = c/5$ but apply (Overshoot - Euler) 5 times (i.e., the Langevin step in \Eqref{eq:langevin-sde-2}) followed by 1 Euler step in the end at each time $t$. We see that with smaller $c$ but more local Langevin steps the smoothing effect also goes away, but in practice, this requires more model evaluations.
    }
    \label{fig:details}
    \vspace{-5pt}
\end{figure}

\paragraph{Impact of Overshooting Strength.}
We further examine the effect of $c$, which governs the maximum overshooting strength per step. Results are shown in Figure~\ref{fig:step_and_perf}. Generally, increasing $c$ enhances text rendering accuracy, with performance plateauing at $c \geq 2$ and occasionally declining for very large values of $c$. Consequently, we set $c = 2$ as the default in practice. To illustrate the degradation in image quality caused by high $c$, we direct readers to Appendix~\ref{sec::large_c_artifact} for examples.

\begin{figure}[ht!]
    \vspace{-5pt}
    \centering
    \includegraphics[width=\columnwidth]{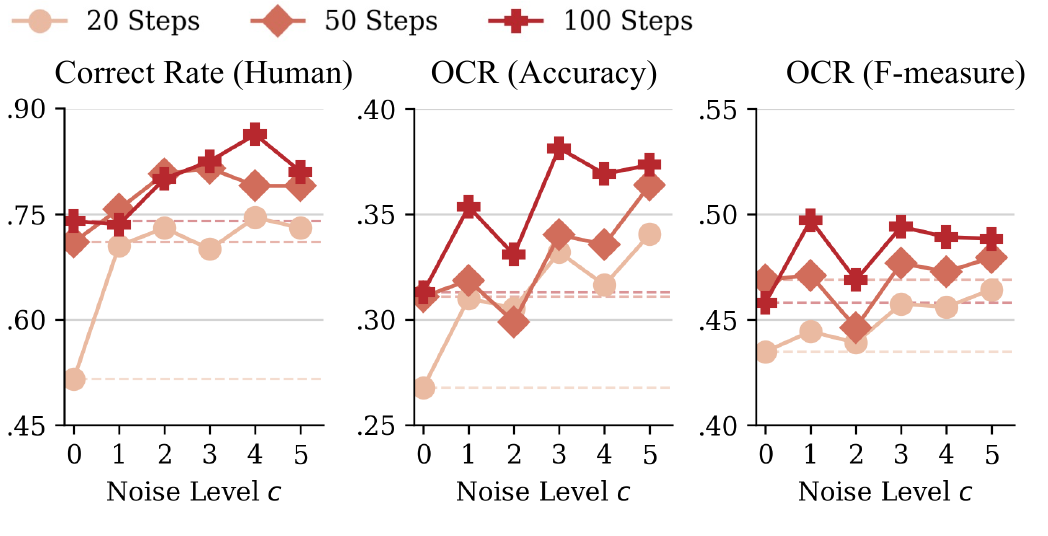}
    \vspace{-5pt}
    \caption{\textbf{Impact of overshooting strength on Text Rendering Performance.} This figure illustrates how varying the overshooting strength parameter $c$ in \ours{} affects the Flux model's text rendering performance. Larger $c$ tends to achieve higher text rendering quality. We observe that $c \geq 2$ is usually good and use $c=2$ as the default value.}
    \label{fig:step_and_perf}
    \vspace{-5pt}
\end{figure}

\subsection{Comparison with Finetuned T2I models}
In this section, we evaluate both text rendering capability and overall quality for two image generation model families for text rendering: \textbf{1)} General-purpose T2I models trained on extensive image datasets using rectified flow, such as Stable Diffusion 3, Flux and AuraFlow; and \textbf{2)} Task-specific T2I models explicitly trained on datasets of ground-truth written text, such as GlyphControl~\cite{liu2024glyph} and TextDiffuser~\cite{chen2024textdiffuser}.

\begin{figure}[ht!]
\vspace{-5pt}
    \centering
    \includegraphics[width=1.0\columnwidth]{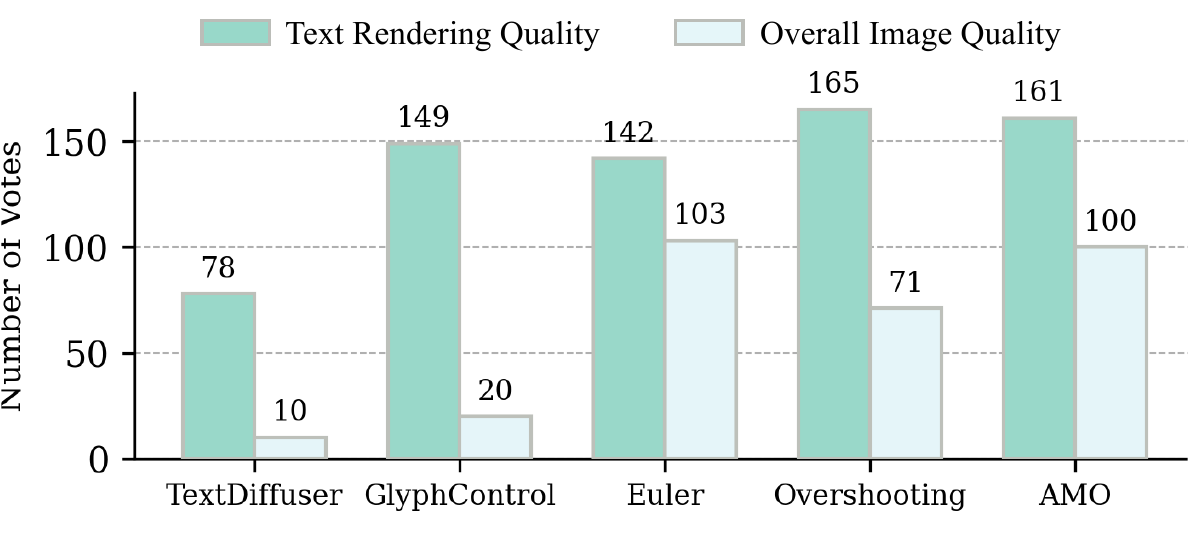}
    \vspace{-5pt}
    \caption{\textbf{Results of human evaluation comparing text rendering quality and overall image quality across five methods.} Participants viewed five images, each generated by one of the methods, and were asked to vote for: (1) the models with the best text rendering quality (multiple-choice), and (2) the model with the best overall image quality (single-choice). The chart shows the number of votes received by each method for both criteria. }
    \label{fig:human_study}
    \vspace{-5pt}
\end{figure}

Specifically, we conducted a human evaluation study to assess image generation and text rendering quality across different methods. The study involved 304 cases (selected randomly from DrawTextCreative, ChineseDrawText, and TMDBEval500) and 15 participants, with each case presenting two questions: (1) "Which of the following images exhibits the highest text rendering quality? (multiple-choice)" and (2) "Which of the following images demonstrates the best overall image quality? (single-choice)". 

The results are summarized in Figure~\ref{fig:human_study}. As shown, \ours{} achieves text rendering accuracy on par with the GraphControl model, which is specifically trained for this task, while delivering superior overall image quality due to the advanced capabilities of the Flux model. Crucially, since \ours{} is a training-free method—unlike approaches such as TextDiffuser or GlyphControl—it can be effortlessly applied to any existing model without requiring further training or risking potential image quality loss from fine-tuning. Additionally, when compared to Overshooting (AMO without attention modulation), \ours{} demonstrates a clear advantage: while both yield similar text rendering quality, the Overshooting sampler diminishes image quality relative to the Euler sampler, whereas \ours{} maintains parity with Euler. This underscores the importance of attention modulation for optimal performance. For additional qualitative comparisons, please refer to Appendix~\ref{sec::more_visual_results}.

\vspace{10pt}
\section{Conclusion and Limitation}
Accurate text rendering has been a critical challenge  in text-to-image generation. 
Existing solutions, such as specialized fine-tuning or scaling up the text encoder, often require modifications to the training process, which can be computationally expensive and time-consuming.
This work introduces a training-free method, the \ourswoatt{} sampler, that enhances text rendering by strategically incorporating curated randomness into the sampling process.  Importantly, \ourswoatt{} significantly improves text rendering accuracy with almost no overhead compared to the vanilla Euler sampler. 
In particular, \ourswoatt{} alternates between overshooting the learned ODE and re-introducing noise, while ensuring the marginal laws are well-preserved. 
In addition, we introduce an attention modulation that quantitatively controls the degree of overshooting, concentrating on text regions within the image while minimizing interference with other areas. We validate \ours{} on popular open-source text-to-image flow models, including Stable Diffusion 3, Flux, and AuraFlow. Results demonstrate that \ours{} consistently outperforms baseline methods in text rendering accuracy without degrading the image quality of the pretrained model. 

One limitation of this work is the lack of systematic evaluation of overshooting's impact on specific aesthetics or its applicability beyond image generation. Future work could explore extending \ours{} to other domains.

\clearpage
\section{Acknowledgment}

We thank Yeqing Li, Eugene Ie and Zihui Xue for their constructive feedback, and the anonymous reviewers for their helpful suggestions.
This work was supported in part by the National Science Foundation (NSF) under Grant NSF CAREER 1846421, Office of Navy Research, NSF AI Institute for Foundations of Machine Learning (IFML) and a grant from Google. 
{
    \small
    \bibliographystyle{ieeenat_fullname}
    \bibliography{main}
}
\clearpage

\appendix
\onecolumn
\section{Appendix}

\subsection{The SDE Limit of the Overshooting Sampler}\label{sec::overshooting_sde_limit}

In this section, we derive the asymptotic limit of the overshooting sampler's update as a stochastic differential equation (SDE) by considering the infinitesimal step size $\epsilon \to 0$ in the definitions of $s$ and $o$.
Recall that \begin{equation}\label{eq
} s = t + \epsilon, \quad o = s + c \epsilon = t + (1 + c)\epsilon, \end{equation} where $c$ is a constant parameter.
Combining the update equations (see \Eqref{eq:ode-overshoot} and \Eqref{eq:noise-compensate}), we obtain 
\begin{equation}
\begin{split} 
\tilde{\Z}_s &= a \hat{\Z}_o + b \vec{\xi} \\
&= a \tilde{\Z}_t + a (o - t) \vec{v}(\tilde{\Z}_t, t) + b\vec{\xi} \\
&= \tilde{\Z}_t + \underbrace{(a - 1) \tilde{\Z}_t + a (o - t) \vec{v}(\tilde{\Z}_t, t)}_{\text{Drift}} + \underbrace{b \vec{\xi}}_{\text{Diffusion}}, 
\end{split} 
\end{equation} 
We aim to express the update in the form \begin{equation}\label{eq
} \tilde{\Z}_{t+\epsilon} \approx \tilde{\Z}_t + \vec{v}^{\text{adj}}(\tilde{\Z}_t, t) \epsilon + \sigma_t \sqrt{\epsilon} \vec{\xi}_t, \end{equation} which corresponds to the Euler–Maruyama discretization of the SDE \begin{equation}\label{eq
} d \Z_t = \vec{v}^{\text{adj}}(\Z_t, t) dt + \sigma_t d\vec W_t, \end{equation} with $\vec W_t$ denoting a standard Wiener process.

To this end, we perform a first-order Taylor expansion assuming $\epsilon \to 0$: First, we compute $a - 1$: 
\begin{equation}
\begin{split} 
a - 1 &= \frac{s}{o} - 1 = \frac{s - o}{o} = \frac{-c \epsilon}{o} \approx -\frac{c \epsilon}{t}, 
\end{split} 
\end{equation} 
where we use the approximation $o \approx t$ for small $\epsilon$.
Next, we compute $a (o - t)$: 
\begin{equation}
\begin{split} 
a (o - t) &= \frac{s}{o} (o - t) = \frac{t + \epsilon}{t + (1 + c) \epsilon} (1 + c) \epsilon \approx (1 + c) \epsilon, 
\end{split} 
\end{equation}
Now, we compute $b^2$: 
\begin{equation}
\begin{split} 
b^2 &= (1 - s)^2 - s^2 \left( \frac{1 - o}{o} \right)^2 \\ 
&= s^2 \left( \left( \frac{1 - s}{s} \right)^2 - \left( \frac{1 - o}{o} \right)^2 \right) \\
&= s^2 \left( f(s) - f(o) \right), 
\end{split} 
\end{equation} 
where $f(x) = \left( \frac{1 - x}{x} \right)^2$. Using a first-order Taylor expansion of $f(x)$ around $x = s$, we have 
\begin{equation}
\begin{split} 
f(o) &\approx f(s) + f'(s) (o - s) \\
&= f(s) + f'(s) c \epsilon \\
&= 2 \frac{1 - t}{t} c \epsilon , 
\end{split}
\end{equation} 
Combining the above results, the update equation becomes \begin{equation}
\tilde{\Z}_{t+\epsilon} \approx \tilde{\Z}_t + \vec{v}^{\text{adj}}(\tilde{\Z}_t, t) \epsilon + \sigma_t \sqrt{\epsilon} \vec{\xi}_t, 
\end{equation} 
where the adjusted velocity is 
\begin{equation}
\begin{split} 
\vec{v}^{\text{adj}}(\tilde{\Z}_t, t) &= \left( \frac{a - 1}{\epsilon} \right) \tilde{\Z}_t + \left( \frac{a (o - t)}{\epsilon} \right) \vec{v}(\tilde{\Z}_t, t) \\
&= -\frac{c}{t} \tilde{\Z}_t + (1 + c) \vec{v}(\tilde{\Z}_t, t), 
\end{split} 
\end{equation} 
Thus, the limit SDE is 
\begin{equation}
d \Z_t = \vec{v}^{\text{adj}}(\Z_t, t) dt + \sigma_t d\vec W_t = \left( (1 + c) \vec{v}(\Z_t, t) - \frac{c}{t} \Z_t \right) dt + \sqrt{\frac{2 c (1 - t)}{t}} d\vec W_t. 
\end{equation}
This provides the SDE limit of the overshooting sampler as $\epsilon \to 0$.

\subsection{Stochastic Sampler by Fokker Planck Equation}\label{sec::langevin_dynamics}
As mentioned in Section~\ref{sec::method-langevin}, according to the Fokker-Planck Equation, for an ODE $d\Z_t = \vec{v}(\Z_t, t) dt$, we can construct a family of SDEs that share the same marginal law as the ODE at all $t$:
$$
d\Z_t = \bigg( \vec{v}(\Z_t, t) + \frac{\sigma_t^2}{2} \nabla \log \rho_t(\Z_t) \bigg) dt + \sigma_t d\vec W_t.
$$

Now, we only need to figure out $\log \rho_t(\Z_t)$ and then we can find the corresponding $\sigma_t^2$ that matches with the limiting SDE of the overshooting algorithm. To this end, we present the next two lemmas before presenting the equivalence.

\begin{lemma}
Assume random variables $\X = \vec{Y} + \Z$, where $\vec{Y}$ and $\Z$ are independent, then 
$$
\nabla_x \log \rho_\X(x) = \mathbb{E}[ \nabla_{\vec{y}} \log \rho_{\vec{Y}}(\vec{Y}) \mid \X = x] =  \mathbb{E}[ \nabla_z \log \rho_\Z(\Z) \mid \X = x],
$$
where $\rho_\Z$ and $\rho_{\vec{Y}}$ are the density functions of $\Z$ and $\vec{Y}$, respectively. 
\label{lem:independent-score}
\end{lemma}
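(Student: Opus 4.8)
The plan is to prove the identity by differentiating under the convolution integral. Since $\X = \vec Y + \Z$ with $\vec Y \perp \Z$, the density of $\X$ is the convolution
\[
\rho_\X(x) = \int \rho_{\vec Y}(x - z)\, \rho_\Z(z)\, \diff z = \int \rho_{\vec Y}(y)\, \rho_\Z(x - y)\, \diff y,
\]
and I would use whichever of the two symmetric forms is convenient for each half of the claimed equality. First I would take the gradient in $x$ and move it inside the integral (justified by standard dominated-convergence / smoothness assumptions on the densities, which I would state as hypotheses): using the first form,
\[
\nabla_x \rho_\X(x) = \int \nabla_x \rho_{\vec Y}(x - z)\, \rho_\Z(z)\, \diff z = \int \big(\nabla \log \rho_{\vec Y}\big)(x-z)\, \rho_{\vec Y}(x-z)\, \rho_\Z(z)\, \diff z.
\]

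Next I would divide both sides by $\rho_\X(x)$ and recognize the integrand as an expectation against the conditional density of $\vec Y$ (equivalently of $\Z$) given $\X = x$. Concretely, the joint density of $(\vec Y, \X)$ is $\rho_{\vec Y}(y)\rho_\Z(x-y)$, so the conditional density of $\vec Y$ given $\X = x$ is $\rho_{\vec Y}(y)\rho_\Z(x-y)/\rho_\X(x)$. Therefore
\[
\nabla_x \log \rho_\X(x) = \frac{\nabla_x \rho_\X(x)}{\rho_\X(x)} = \int \big(\nabla \log \rho_{\vec Y}\big)(y)\, \frac{\rho_{\vec Y}(y)\rho_\Z(x-y)}{\rho_\X(x)}\, \diff y = \mathbb{E}\big[\nabla_{\vec y}\log\rho_{\vec Y}(\vec Y) \mid \X = x\big].
\]
The second equality in the lemma follows by the identical computation starting from the other form of the convolution, $\rho_\X(x) = \int \rho_\Z(x-y)\rho_{\vec Y}(y)\diff y$, differentiating the $\rho_\Z$ factor instead, and identifying the conditional density of $\Z$ given $\X = x$. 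By symmetry of the convolution under swapping $\vec Y \leftrightarrow \Z$ the two assertions are really the same statement.

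The only real obstacle is the interchange of differentiation and integration: it requires an integrable dominating function for $\nabla_x \rho_{\vec Y}(x - z)$ (resp. $\nabla_x \rho_\Z(x-y)$) locally uniformly in $x$, which is automatic under mild regularity — e.g. $\rho_{\vec Y}, \rho_\Z$ are $C^1$ with the relevant Fisher-information-type integrals finite, as holds for the Gaussian factors appearing in rectified flow. I would simply record this as a standing assumption. Everything else is bookkeeping with Bayes' rule for densities, so I expect no further difficulty.
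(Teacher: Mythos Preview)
Your proposal is correct and essentially identical to the paper's proof: both write $\rho_\X$ as the convolution $\int \rho_{\vec Y}(x-z)\rho_\Z(z)\,\diff z$, differentiate under the integral, rewrite $\nabla_x\rho_{\vec Y}(x-z)$ as $(\nabla\log\rho_{\vec Y})(x-z)\rho_{\vec Y}(x-z)$, divide by $\rho_\X(x)$, and identify the resulting ratio as the conditional density to obtain the conditional expectation. Your additional remark about the dominated-convergence hypothesis needed to justify the interchange is a point the paper leaves implicit.
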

\begin{proof}
\begin{align*}
\nabla_x \log \rho_\X(x) &= \frac{\nabla_x \rho_\X(x)}{\rho_\X(x)} \\
&= \frac{\nabla_x \int_z \rho_{\X, \Z}(x, z) dz}{\rho_\X(x)} \\
&= \frac{\int_z \rho_\Z(z) \nabla_x \rho_{\vec{Y}}(x - z) dz}{\rho_\X(x)} \qquad\text{\textcolor{magenta}{// $\vec{Y}$ and $\Z$ are independent}}\\
&= \int_z \frac{\nabla_x \rho_{\vec{Y}}(x - z)}{\rho_{\vec{Y}}(x - z)} \frac{\rho_\Z(z) \rho_{\vec{Y}}(x - z)}{\rho_\X(x)} dz  \\
&= \int_z \nabla_x \log \rho_{\vec{Y}}(x - z) \frac{\rho_\Z(z) \rho_{\vec{Y}}(x - z)}{\rho_\X(x)} dz \\
&= \mathbb{E} [ \nabla_x \log \rho_{\vec{Y}}(\X- \Z) \mid \X = x] \\
& = \mathbb{E} [ \nabla_{\vec{y}} \log \rho_{\vec{Y}}(\vec{Y}) \mid \X = x].
\end{align*}
\end{proof}

\begin{lemma}
Given the linear interpolation in Rectified Flow $\X_t = t\X_1 + (1-t) \X_0$, where $\X_0 \sim \mathcal{N}(0, I)$, we have
\begin{align} \label{eq:nabla-log-rhot}
\nabla_x \log \rho_t(x) = \frac{t\vec{v}(x, t) - x}{1 - t}.
\end{align}
\end{lemma}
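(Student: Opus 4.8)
The plan is to derive \eqref{eq:nabla-log-rhot} by combining Lemma~\ref{lem:independent-score} with the explicit Gaussian structure of the ``noise part'' of $\X_t$ and the defining variational characterization of the velocity field $\vec{v}$ from Section~\ref{sec::background}. In one sentence: the score of $\rho_t$ is the posterior mean of the score of the Gaussian component of $\X_t$, this posterior mean is (up to scaling) the posterior mean of $\X_0$ given $\X_t$, and that quantity is expressible through $\vec{v}$ because $\vec{v}$ is by construction $\mathbb{E}[\dot\X_t \mid \X_t]$.

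First I would split $\X_t = \vec{Y} + \Z$ with $\vec{Y} := t\X_1$ and $\Z := (1-t)\X_0$, which are independent since $\X_0$ and $\X_1$ are. For $t\in(0,1)$, $\Z \sim \mathcal{N}(0,(1-t)^2 I)$ is an isotropic Gaussian, so $\nabla_z \log\rho_\Z(z) = -z/(1-t)^2$. Plugging this decomposition into Lemma~\ref{lem:independent-score} gives
\[
\nabla_x \log \rho_t(x) \;=\; \mathbb{E}\!\left[\nabla_z \log \rho_\Z(\Z)\,\middle|\,\X_t = x\right] \;=\; -\frac{1}{(1-t)^2}\,\mathbb{E}[\Z\mid \X_t = x] \;=\; -\frac{1}{1-t}\,\mathbb{E}[\X_0\mid \X_t = x].
\]

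Next I would rewrite $\mathbb{E}[\X_0\mid\X_t=x]$ in terms of $\vec{v}$. The minimizer of the RF objective satisfies $\vec{v}(x,t) = \mathbb{E}[\dot\X_t\mid\X_t=x] = \mathbb{E}[\X_1-\X_0\mid\X_t=x]$, and solving $\X_t = t\X_1 + (1-t)\X_0$ for $\X_1$ gives the pathwise identity $\X_1-\X_0 = (\X_t-\X_0)/t$. Taking conditional expectations, $\vec{v}(x,t) = \big(x - \mathbb{E}[\X_0\mid\X_t=x]\big)/t$, i.e.\ $\mathbb{E}[\X_0\mid\X_t=x] = x - t\,\vec{v}(x,t)$. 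Substituting into the previous display yields
\[
\nabla_x \log \rho_t(x) \;=\; -\frac{x - t\,\vec{v}(x,t)}{1-t} \;=\; \frac{t\,\vec{v}(x,t) - x}{1-t},
\]
which is exactly \eqref{eq:nabla-log-rhot}.

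There is no serious obstacle here; the argument is two short identities glued together. The only things to watch are structural: the use of Lemma~\ref{lem:independent-score} presumes enough regularity to differentiate under the integral sign (granted by that lemma's hypotheses, and harmless in this application since one factor is a smooth nondegenerate Gaussian for $t<1$), and the statement should be confined to $t\in(0,1)$ so as to avoid the degenerate endpoints $t=0$ (where $\rho_t$ concentrates) and $t=1$ (where $1-t$ vanishes). It is also worth flagging that the last step uses $\vec{v}$ being the exact $L^2$-optimal conditional expectation, so the identity is an equality for the true velocity field and holds only up to approximation error for a trained network — the usual caveat implicit throughout the paper.
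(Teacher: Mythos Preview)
Your proposal is correct and follows essentially the same argument as the paper: the same decomposition $\vec{Y}=t\X_1$, $\Z=(1-t)\X_0$, the same invocation of Lemma~\ref{lem:independent-score} together with the Gaussian score to reach $-\tfrac{1}{1-t}\mathbb{E}[\X_0\mid\X_t=x]$, and then the same algebraic substitution using $\vec{v}(x,t)=\mathbb{E}[\X_1-\X_0\mid\X_t=x]$ and $\X_t=t\X_1+(1-t)\X_0$. Your added caveats about $t\in(0,1)$ and the identity holding for the exact (not approximated) velocity field are accurate refinements the paper leaves implicit.
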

\begin{proof}
As $\X_0$ and $\X_1$ are independent since $\X_0$ is the standard multivariant Gaussian and $\X_1$ is the data distribution, take $\vec{Y} = t \X_1$ and $\Z = (1-t)\X_0$. According to Lemma~\ref{lem:independent-score},  we have
\begin{align*} 
\begin{split}
\nabla_x \log \rho_t(x) 
& = \mathbb{E} [ \nabla_z \log \rho_\Z(\Z) \mid \X_t = x] \\
&= - \frac{1}{(1 - t)^2} \mathbb{E}[\Z \mid \X_t = x ]  \qquad\text{\textcolor{magenta}{// $\Z \sim \mathcal{N}(0, (1-t)^2 I)$}}\\
&= -\frac{1}{1 - t} \mathbb{E}[\X_0 \mid \X_t = x] \qquad\text{\textcolor{magenta}{// $\Z = (1-t)\X_0$}}\\
&= \frac{1}{1-t} \mathbb{E}[ t (\X_1 - \X_0) - \X_t \mid \X_t = x] \qquad\text{\textcolor{magenta}{// $\X_t = t\X_1 + (1-t) \X_0$}}\\
&= \frac{t\vec{v}(x, t) - x}{1-t} \qquad\text{\textcolor{magenta}{// $E[\X_1 - \X_0 \mid \X_t] = \vec{v}(\X_t, t)$}}.
\end{split}
\end{align*} 
\end{proof}
Plugging in \Eqref{eq:nabla-log-rhot} to the SDE, we have
$$
d\Z_t = \bigg( \vec{v}(\Z_t, t) + \frac{\sigma_t^2}{2} \frac{t\vec{v}(\Z_t, t) - \Z_t}{1 - t} \bigg) dt + \sigma_t d\vec W_t.
$$
If we choose $\sigma_t^2 = 2c \frac{1-t}{t}$, then we get
\begin{equation}
\begin{split}
d\Z_t 
&= \bigg( \vec{v}(\Z_t, t) + c \frac{1-t}{t} \frac{t\vec{v}(\Z_t, t) - \Z_t}{1 - t} \bigg) dt + \sqrt{2c \frac{1-t}{t}} d\vec W_t \\
&= \bigg((1 + c) \vec{v}(\Z_t, t) - \frac{c}{t} \Z_t\bigg) dt + \sqrt{2c \frac{1-t}{t}} d\vec W_t,
\end{split}
\end{equation}
which matches \Eqref{eq:overshoot-sde} exactly.

\subsection{Experiment Details}~\label{sec::experiment_setting}

\paragraph{Model Configurations and Hyperparameter Settings.}
The hyperparameter settings for the Flux (FLUX.1-dev), Stable Diffusion 3 Medium, and AuraFlow models are summarized in Table~\ref{tab:hyperparameters}. Unless stated otherwise, all experiments are conducted with a default inference step count of 100.
\begin{table*}[ht!]
\centering
\small
\resizebox{0.7\textwidth}{!}{
\begin{tabular}{lccc}
\toprule
Hyperparameters & FLUX.1-dev & Stable Diffusion 3 Medium & AuraFlow \\
\midrule
Image size & $1024 \times 1024$ & $1024 \times 1024$ & $1024 \times 1024$ \\
CFG scale & 3.5 & 7.0 & 3.5 \\
Model Precision & BFloat16 & Float32 & Float16 \\
Overshooting Strength $c$ & 2.0 & 1.0 & 1.0 \\
\bottomrule
\end{tabular}
}
\caption{Hyperparameter settings for our experiments.}
\label{tab:hyperparameters}
\end{table*}

\paragraph{Human Evaluation Setup. }
We conducted human evaluations to assess text rendering quality and image fidelity. The details of the human evaluation setup are as follows:
\begin{itemize}
    \item \textbf{Text Rendering Evaluation:} The evaluation includes a total of 100 prompts, each consisting of 5–8 words, which are provided in the supplementary material (\texttt{prompts\_human\_eval.txt}). Participants are presented with a text prompt and an image generated by one of the models. They are tasked with assessing the correctness of the rendered text in the image. Each image is evaluated by at least two participants. In total, this evaluation involved 72 unique participants.
    \item \textbf{Comparative Evaluation of Text and Image Quality:} To compare text rendering quality and overall image quality, 100 samples were selected. These include 25 prompts each from the DrawTextCreative, ChineseDrawText, and TMDBEval500 benchmarks, as well as the primary human evaluation prompts. This evaluation was conducted with 15 participants. 
\end{itemize}
This comprehensive evaluation ensures a robust assessment of the model's ability to generate high-quality images and accurately render text.

\subsection{Problem in evaluating OCR}~\label{sec:evaluation_issue}
While OCR tools provide an automatic method for assessing the correctness of rendered text in images, our experiments reveal limitations in existing OCR systems when evaluating state-of-the-art text-to-image models such as FLUX. Specifically, we employed Mask TextSpotter v3~\cite{liao2020mask} and found that it struggles to accurately detect and recognize text generated by FLUX.
As illustrated in Figure~\ref{fig:ocr_result}, Mask TextSpotter performs better when evaluating models like TextDiffuser and GlyphControl, which tend to generate text with simpler layouts and standard fonts. These characteristics align more closely with the training data of the OCR model, making detection easier. In contrast, FLUX-generated text exhibits greater stylistic flexibility and diversity, posing significant challenges for existing OCR tools despite the text being rendered correctly. We provide examples in Figure~\ref{fig:ocr_result}, highlighting the OCR performance disparity. The detected text boxes and predictions are shown in red. These results underscore the need for improved OCR systems capable of handling the creative and flexible text styles generated by advanced text-to-image models.

\begin{figure}[ht!]
    \centering
    \includegraphics[width=\columnwidth]{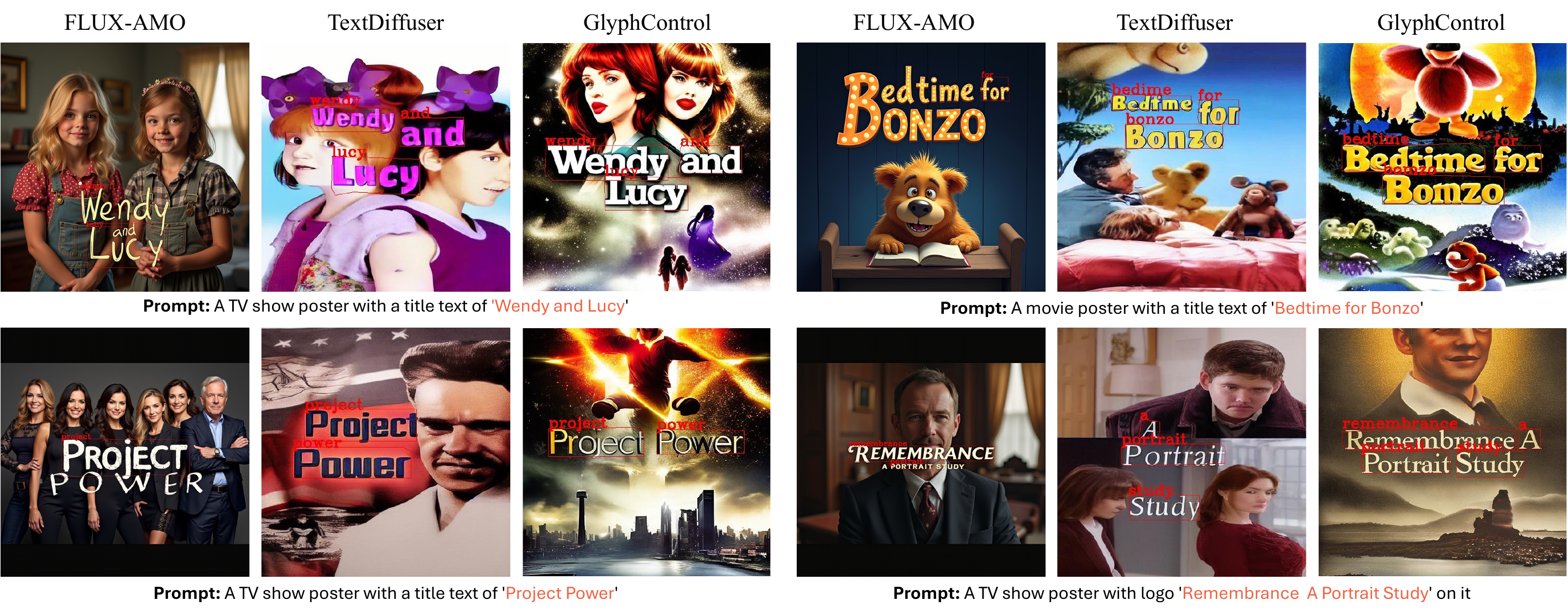}
    \caption{Examples of OCR model performance. Detected text boxes and prediction results are shown in red. The OCR model fails to detect text generated by the FLUX model effectively, even though the text is rendered correctly.}
    \label{fig:ocr_result}
\end{figure}

\begin{figure}[ht!]
    \centering
    \includegraphics[width=1.0\columnwidth]{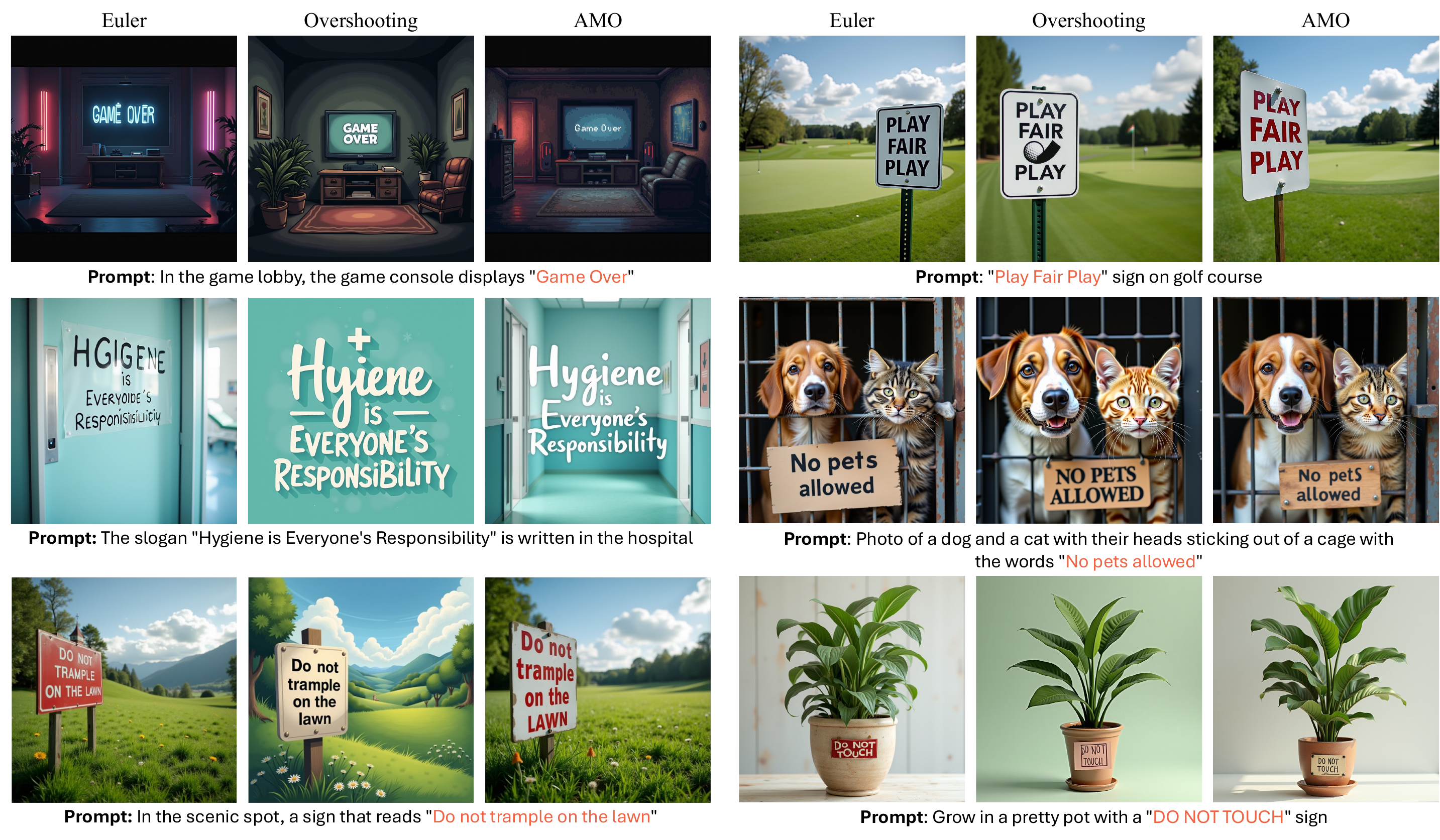}
    \caption{\textbf{Image Quality for Euler, Overshooting, and AMO. } Please zoom in for finer details. The Overshooting method shown here employs a one-step overshooting strategy, ensuring the overall computational cost remains comparable across all three methods. The overshooting approach results in cartoonish, over-smoothed outputs that lack high-frequency details. In contrast, Euler and AMO generate images that resemble real-world visuals more closely.}
    \label{fig:details_appendix}
\end{figure}

\subsection{Additional Qualitative Results}

\subsubsection{Additional Results on Image Quality for Euler, Overshooting, and AMO}

We present additional results in Figure~\ref{fig:details_appendix} to further illustrate our findings. 
These results confirm that overshooting (without attention modulation) tends to produce an over-smoothing effect, leading to generated samples lacking high-frequency details.

\begin{figure}[ht!]
    \centering
    \includegraphics[width=\columnwidth]{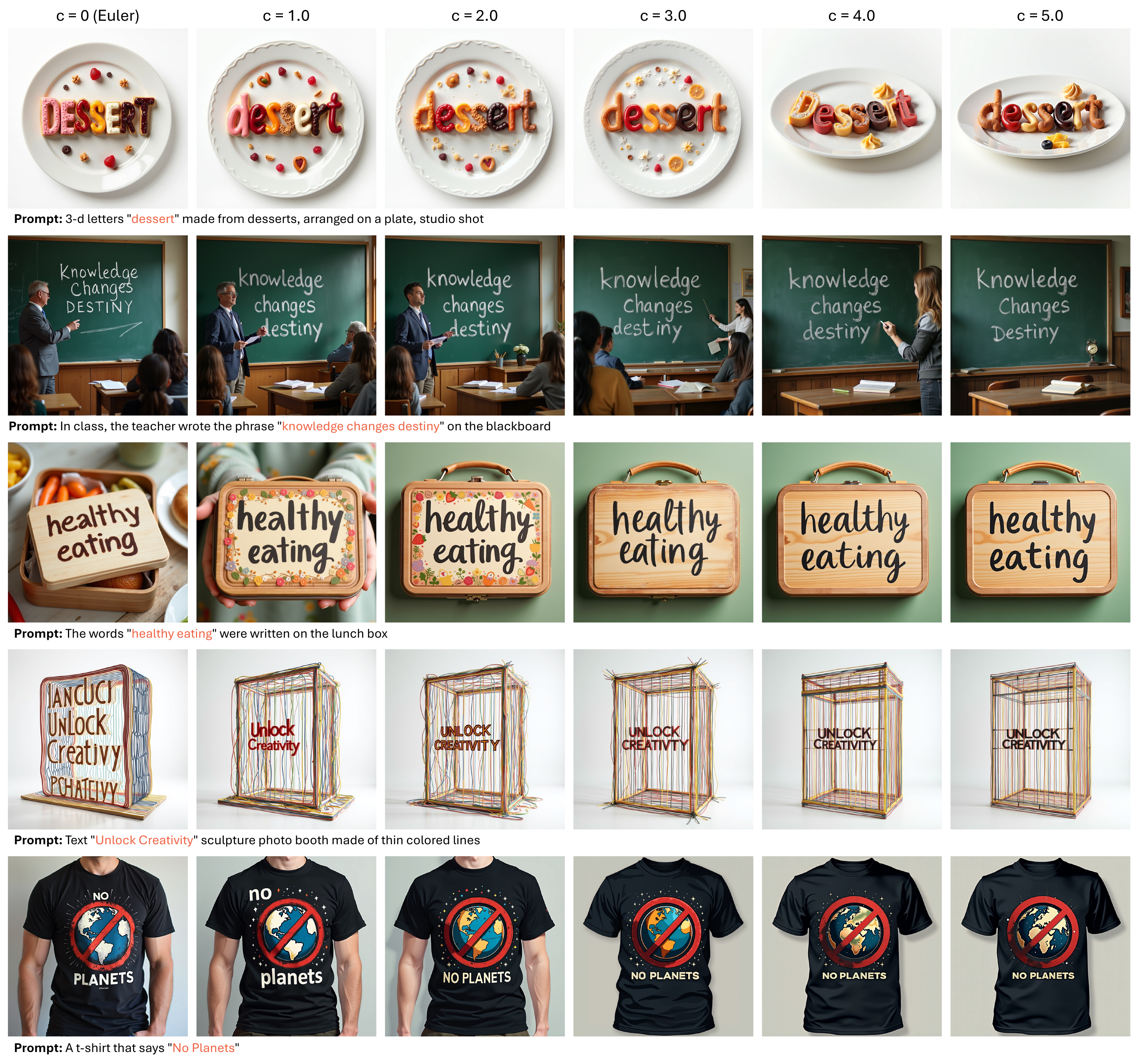}
    \caption{\textbf{Samples generated by varying $c$.} As $c$ increases, the images gradually lose complexity and fine details due to over-smoothing. For moderate values of $c$, such as $c = 2$, the results achieve a balance between accurate text rendering and visual quality.}
    \label{fig:samples_varying_c}
\end{figure}

\subsubsection{Quantative Results on AMO with Different Overshooting Strength $c$}\label{sec::large_c_artifact}

In the experiment section, we demonstrated that increasing the overshooting strength $c$ improves text rendering accuracy, with performance plateauing at $c \geq 2$ and occasionally declining for very large values of $c$. Here, we provide visual examples for varying values of $c$, as shown in Figure~\ref{fig:samples_varying_c}. We observe that as $c$ increases significantly, the generated images tend to exhibit simpler structures and fewer details. This behavior is expected because the attention modulation applies a soft overshooting strategy, where excessively large $c$ introduces over-smoothing artifacts. However, these artifacts are significantly mitigated compared to results generated without attention modulation.

\subsubsection{Additional Samples on Comparison between Euler and AMO}
We provide more results in Figure~\ref{fig:samples_euler_amo_appendix}. showcasing the differences between the Euler sampler and our AMO method.

\begin{figure}[ht!]
    \centering
    \includegraphics[width=\columnwidth]{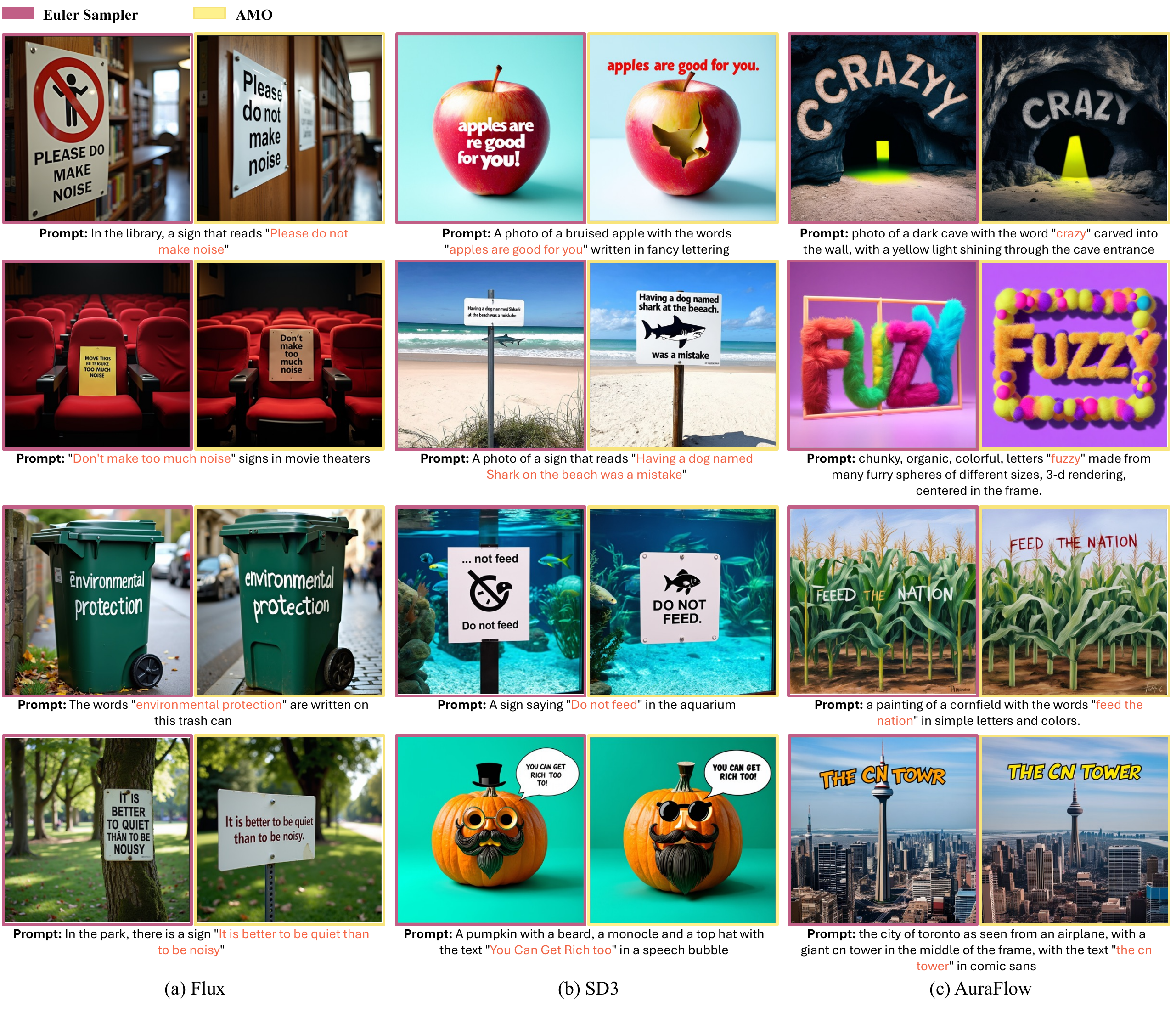}
    \caption{\textbf{Comparison of text rendering quality between Euler and AMO.} Results are presented across three different text-to-image models: Flux, Stable Diffusion 3, and AuraFlow. All images are generated using the same random seed. In each pair of images, the left column shows the results from the Euler sampler, while the right column displays results generated by our AMO method. AMO consistently produces clearer and more legible text that aligns more closely with the given prompts, demonstrating its superiority in text rendering quality.}
    \label{fig:samples_euler_amo_appendix}
\end{figure}

\clearpage
\subsection{Additional Results on Comparison with Finetuned Text-to-Image Models}~\label{sec::more_visual_results}

We present sample images from the human evaluation study comparing TextDiffuser, GlyphControl, Euler, Overshooting, and AMO. These examples are shown in Figure~\ref{fig:samples_human_study_appendix}. During the human evaluation, participants were presented with five images generated by the respective methods and asked to answer two questions: \textbf{Question 1}: Which of the following images exhibits the highest text rendering quality? (Multiple-choice) \textbf{Question 2}: Which of the following images demonstrates the best overall image quality? (Single-choice)

\begin{figure}[ht!]
    \centering
    \includegraphics[width=\columnwidth]{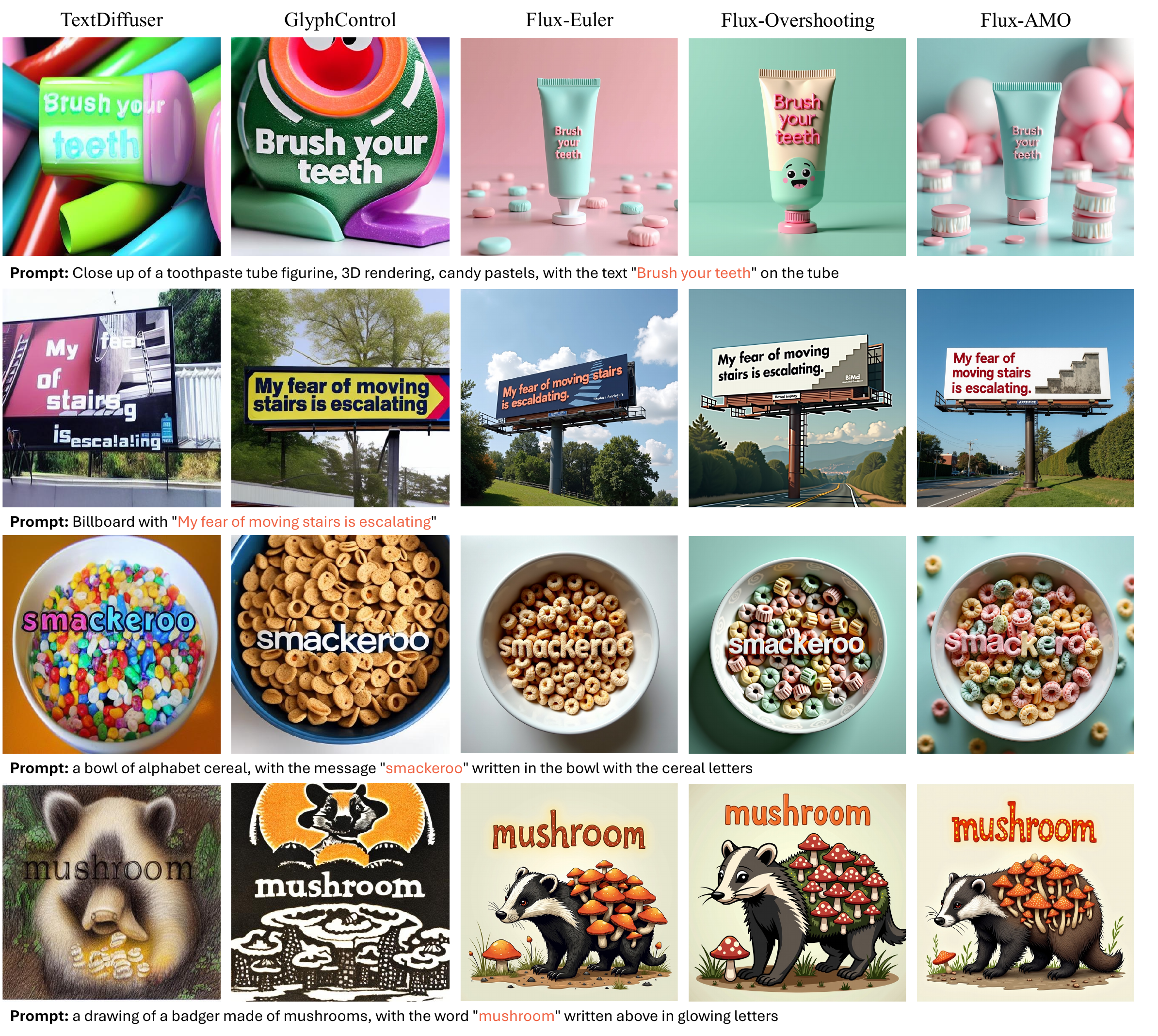}
    \caption{\textbf{Comparison of samples generated by different methods}, including TextDiffuser, GlyphControl, Euler, Overshooting, and AMO. During the human evaluation, participants were shown five images for comparison.} 
    \label{fig:samples_human_study_appendix}
\end{figure}

\noindent
\subsection{Exploring Tasks Beyond Text Rendering} Our initial exploration shows that overshooting sampler improves the rendering of details such as hands and human body structures (see Fig.~\ref{fig:hand}). However, these improvements are difficult to quantify without extensive human evaluation. Hence, we focus on text rendering, where OCR-based metrics, supplemented by human evaluation, provide a more direct and affordable evaluation. This work lays the groundwork for future exploration of other tasks.

\begin{figure}[ht!]
    \centering
    \vspace{-5pt}
    \includegraphics[width=0.9\columnwidth]{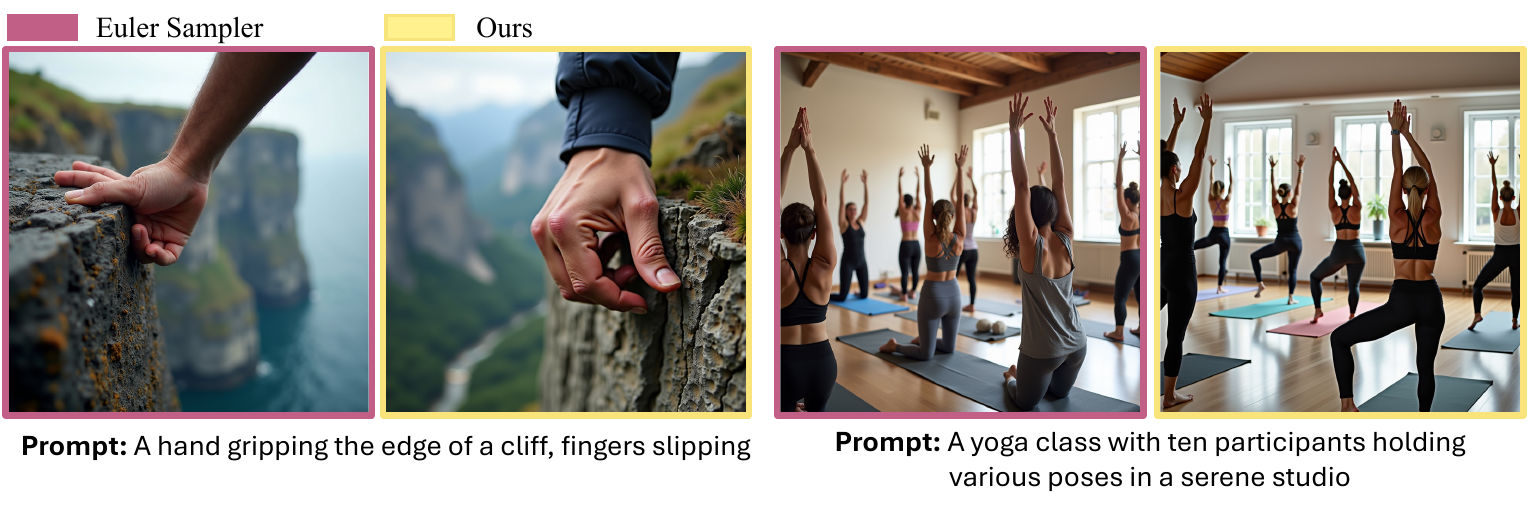}\vspace{-5pt}
    \caption{Correcting hands and body structure using our method. }
    \label{fig:hand}
\end{figure}

\noindent
\subsection{OCR-based Comparison with Specialized Text Rendering Models} We present the OCR results in Table~\ref{tab:ocr}, but it is crucial to mention that these numbers can be \textbf{misleading}. Current OCR tools struggle with the diverse and artistic fonts generated by general-purpose T2I models such as Flux.  Specialized text rendering models, on the other hand, tend to produce text in a single, OCR-optimized font. Consequently, the seemingly lower OCR scores for Flux do not necessarily indicate poorer text rendering performance. Thus we prefer human evaluation as shown in Figure~\ref{fig:human_study}. 

On the TMDBEval500 dataset (500 images), our manual evaluation of the OCR model revealed it has only 54\% accuracy in recognizing rendered text from Flux, compared to 92\% for TextDiffuser. Furthermore, extraneous text content, often generated by general-purpose T2I models, can negatively influence OCR-A by reducing precision.

\begin{table}[h!]
\centering
\resizebox{0.5\columnwidth}{!}{
\begin{tabular}{l|cccc}
\toprule
& \textbf{TextDiffuser} & \textbf{GlyphControl} & \textbf{Flux-Euler} & \textbf{Flux-Ours}\\
\midrule
\textbf{OCR-A}  & 0.491 & 0.537 & 0.313 & 0.381 \\  
\textbf{OCR-F}  & 0.625 & 0.591 & 0.458 & 0.494 \\  
\bottomrule
\end{tabular}
}
\caption{OCR-A and OCR-F results. Note that TextDiffuser and GlyphControl were optimized for OCR tools, while Flux's generated text are more diverse, leading to a lower reported score.}
\label{tab:ocr}
\end{table}

\end{document}